\renewcommand*{\backrefalt}[4]{%
    \ifcase #1 \footnotesize{(Not cited.)}%
    \or        \footnotesize{(Cited on page~#2.)}%
    \else      \footnotesize{(Cited on pages~#2.)}%
    \fi}
\newtheorem{assumption}{Assumption}
\newtheorem{proposition}{Proposition}
\newtheorem{definition}{Definition}
\newcommand{\norm}[1]{\left\lVert#1\right\rVert}
\def\TM{\mathcal{T}}
\def\TWA{W_{d_{\TM}}}
\def\OT{\textnormal{OT}}
\def\TW{\textnormal{TW}}
\def\TSW{\textnormal{TSW}}
\def\TDA{\textnormal{TDA}}
\def\PD{\textnormal{PD}}
\def\PF{\textnormal{PF}}
\def\PSS{\textnormal{PSS}}
\def\PWG{\textnormal{PWG}}
\def\SW{\textnormal{SW}}
\def\Dg{\textnormal{Dg}}
\def\RR{\mathbb{R}}
\def\NN{\mathbb{N}}
\def\Ff{\mathcal{F}}
\def\Pp{\mathcal{P}}
\def\Ff{\mathcal{F}}
\def\Rr{\mathcal{R}}
\def\Xx{\mathcal{X}}
\def\Tt{\mathcal{T}}
\begin{document}

\begin{center}
{\bf{\LARGE{Tree-Sliced Variants of Wasserstein Distances}}}

\vspace*{.2in}
    {\large{
 \begin{tabular}{cccc}
  Tam Le$^{\dagger}$ &  Makoto Yamada$^{\ddagger, \dagger}$ & Kenji Fukumizu$^{\diamond, \dagger}$ & Marco Cuturi$^{\triangleleft, \triangleright}$ \\
 \end{tabular}
 
}}

\vspace*{.2in}

 \begin{tabular}{c}
 RIKEN AIP$^\dagger$, Kyoto University$^\ddagger$ \\
 The Institute of Statistical Mathematics, Japan$^\diamond$ \\
 Google Brain, Paris$^\triangleleft$, CREST - ENSAE$^\triangleright$\\
 \end{tabular}
\vspace*{.2in}

\today

\vspace*{.2in}

\begin{abstract}

Optimal transport (\OT) theory defines a powerful set of tools to compare probability distributions. \OT~suffers however from a few drawbacks, computational and statistical, which have encouraged the proposal of several regularized variants of OT in the recent literature, one of the most notable being the \textit{sliced} formulation, which exploits the closed-form formula between univariate distributions by projecting high-dimensional measures onto random lines. We consider in this work a more general family of ground metrics, namely \textit{tree metrics}, which also yield fast closed-form computations and negative definite, and of which the sliced-Wasserstein distance is a particular case (the tree is a chain). We propose the tree-sliced Wasserstein distance, computed by averaging the Wasserstein distance between these measures using random tree metrics, built adaptively in either low or high-dimensional spaces. Exploiting the negative definiteness of that distance, we also propose a positive definite kernel, and test it against other baselines on a few benchmark tasks.

\end{abstract}
\end{center}

\section{Introduction}
\label{sec:introduction}

Many tasks in machine learning involve the comparison of two probability distributions, or histograms. Several geometries in the statistics and machine learning literature are used for that purpose, such as the Kullback-Leibler divergence, the Fisher information metric, the $\chi^2$ distance, or the Hellinger distance, to name a few. Among them, the optimal transport (\OT) geometry, also known as Wasserstein \cite{villani2008optimal}, Monge-Kantorovich \cite{kantorovich1942}, or Earth Mover's \cite{rubner2000}, has gained traction in the machine learning community~\cite{genevay2016stochastic, kusner2015word, LeeNIPS2018}, statistics \cite{ebert2017construction, panaretos2016amplitude}, or computer graphics \cite{lavenant2018dynamical, solomon2015convolutional}.




The naive computation of \OT~between two discrete measures involves solving a network flow problem whose computation scales typically cubically in the size of the measures~\cite{burkard1999}. There are two notable lines of work to reduce the time complexity of \OT. \textit{(i)} The first direction exploits the fact that simple ground costs can lead to faster computations. For instance, if one uses the binary metric $d(x,z)=\mathbbm{1}_{x\ne z}$ between two points $x,z$, the \OT~distance is equivalent to the total variation distance \cite[p.7]{villani2003topics}. When measures are supported on the real line $\RR$ and the cost $c$ is a nonegative convex function $g$ applied to the difference $z-x$ between two points, namely for $x, z \in \RR$, one has $c(x, z)=g(z-x)$, then the \OT~distance is equal to the integral of $g$ evaluated on the difference between the generalized quantile functions of these two probability distributions~\cite[\S2]{SantambrogioBook}. Other simplifications include thresholding the ground cost distance~\cite{pele2009fast} or considering for a ground cost the shortest-path metric on a graph~\cite[\S6]{PeyreCuturiBook}. \textit{(ii)} The second one is to use regularization to approximate solutions of \OT~problems, notably entropy~\cite{cuturi2013sinkhorn}, which results in a problem that can be solved using Sinkhorn iterations. Genevay et al. \cite{genevay2016stochastic} extended this approach to the semi-discrete and continuous \OT~problems using stochastic optimization. Different variants of Sinkhorn algorithm have been proposed recently~\cite{altschuler2017near,dvurechensky18}, and speed-ups are obtained when the ground cost is the quadratic Euclidean distance~\cite{altschuler2018approximating, altschuler2018massively}, or more generally the heat kernel on geometric domains~\cite{solomon2015convolutional}. The convergence of Sinkhorn algorithm has been considered in \cite{altschuler2017near, franklin1989scaling}.


In this work, we follow the first direction to provide a fast computation for \OT. To do so, we consider tree metrics as ground costs for \OT, which results in the so-called tree-Wasserstein (\TW) distance \cite{do2011sublinear, evans2012phylogenetic, mcgregor2013sketching}.  We consider two practical procedures to sample tree metrics based on spatial information for both low-dimensional and high-dimensional spaces of supports. Using these random tree-metrics, we propose tree-sliced-Wasserstein distances, obtained by averaging over several \TW~distances with various ground tree metrics. The \TW~distance, as well as its average over several trees, can be shown to be negative definite\footnote{In general, Wasserstein spaces are not Hilbertian~\cite[\S8.3]{PeyreCuturiBook}.}.  As a consequence, we propose a positive definite tree-(sliced-)Wasserstein kernel that generalizes the sliced-Wasserstein kernel~\cite{carriere17asliced, kolouri2016sliced}. 

The paper is organized as follows: we give reminders on \OT~and tree metrics in Section~\ref{sec:reminders}, introduce \TW~distance and its properties in Section~\ref{sec:TWD}, describe tree-sliced-Wasserstein variants with practical families of tree metrics, and proposed tree-(sliced)-Wasserstein kernel in Section~\ref{sec:TSW}, provide connections of \TW~with other work in Section~\ref{sec:relations}, and follow with experimental results on many benchmark datasets in word embedding-based document classification and topological data analysis in Section~\ref{sec:experiments}, before concluding in Section~\ref{sec:conclusion}. We have released code for these tools\footnote{\url{https://github.com/lttam/TreeWasserstein}.}.

\section{Reminders on Optimal Transport and Tree Metrics}
\label{sec:reminders}

In this section, we briefly review definitions of optimal transport (\OT) and tree metrics. Let $\Omega$ be a measurable space endowed with a metric $d$. For any $x \in \Omega$, we write $\delta_x$ the Dirac unit mass on $x$.

\paragraph{Optimal transport.} Let $\mu$, $\nu$ be two Borel probability distributions on $\Omega$, $\Rr(\mu, \nu)$ be the set of probability distributions $\pi$ on the product space $\Omega \times \Omega$ such that $\pi(A \times \Omega) = \mu(A)$ and $\pi(\Omega \times B) = \nu(B)$ for all Borel sets $A$, $B$. The 1-Wasserstein distance $W_d$ \cite[p.2]{villani2003topics} between $\mu$, $\nu$ is defined as:
\begin{equation}\label{equ:OTprob}
W_d(\mu, \nu) = \inf\left\{ \int_{\Omega \times \Omega} d(x, z) \pi(dx, dz) \mid \pi \in \Rr(\mu, \nu) \right\}.
\end{equation}
Let $\Ff_d$ be the set of Lipschitz functions w.r.t. $d$, i.e. functions $f : \Omega \rightarrow \RR$ such that $\left| f(x) - f(z)\right| \le d(x, z), \forall x, z \in \Omega$. The dual of ~\eqref{equ:OTprob} simplifies to the following problem \OT~\cite[Theorem 1.3, p.19]{villani2003topics} is:
\begin{equation}\label{equ:KantorovichDualityProb}
W_d(\mu, \nu) = \sup\left\{\int_{\Omega} f(x) \mu(dx) - \int_{\Omega} f(z) \nu(dz) \mid f \in \Ff_d \right\}.
\end{equation}


\paragraph{Tree metrics.}
A metric $d:\Omega\times\Omega\rightarrow \mathbf{R}$ is called a \textit{tree metric} on $\Omega$ if there exists a tree $\Tt$ with non-negative edge lengths such that all elements of $\Omega$ are contained in its nodes and such that for every $x, z \in \Omega$, one has that $d(x, z)$ equals to the length of the (unique) path between $x$ and $z$~\cite[\S 7, p.145--182]{semple2003phylogenetics}. We write the tree metric corresponding to that tree $d_{\TM}$.

\begin{figure}
  \begin{center}
    \includegraphics[width=0.24\textwidth]{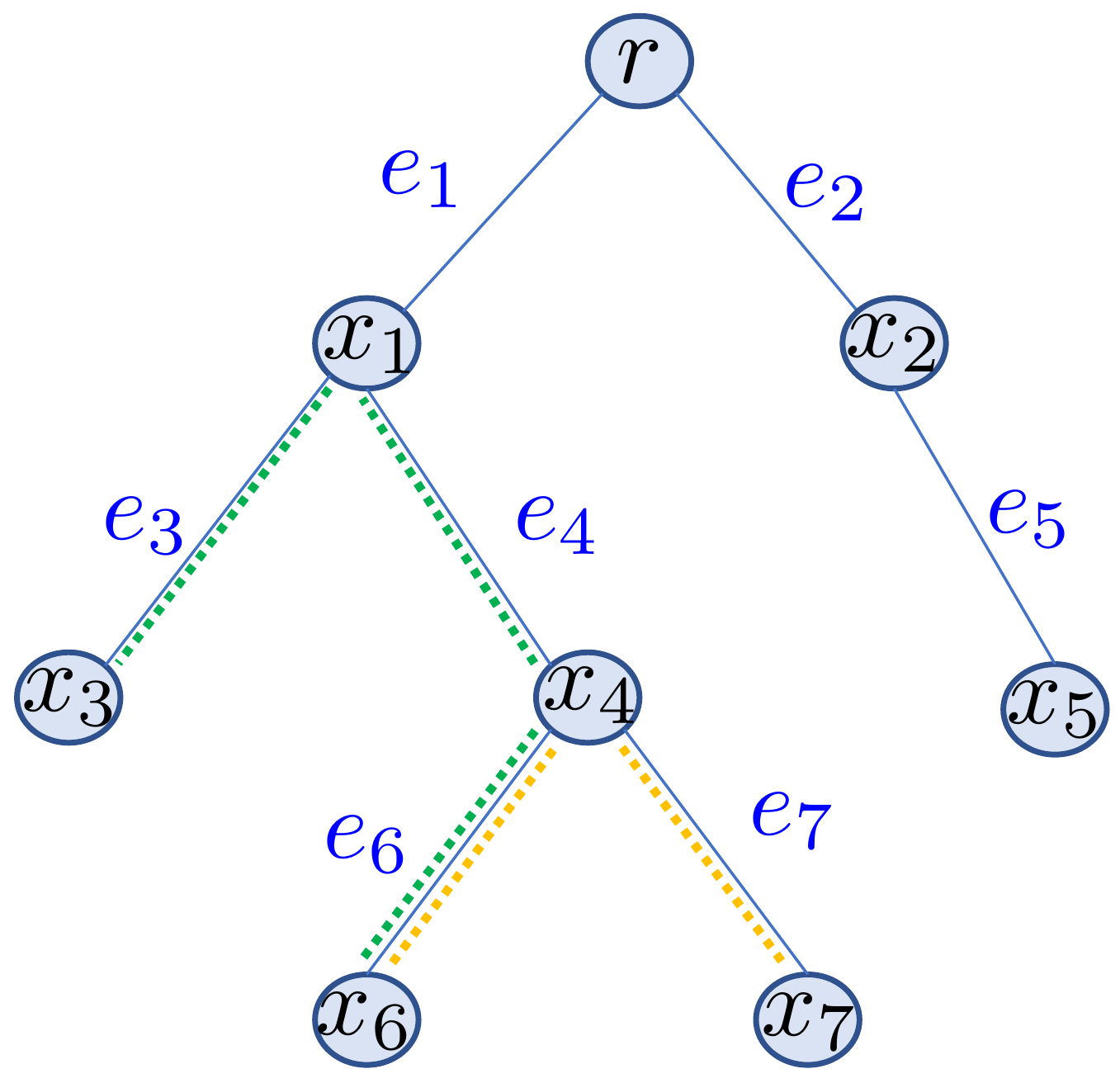}
  \end{center}
  \vspace{-6pt}
  \caption{An illustration for a tree with root $r$ where $x_1, x_2$ are at depth level $1$, and $x_6, x_7$ are at depth level $3$. Path $\Pp(x_3, x_6)$ contains $e_3, e_4, e_6$ (the green-dot path), $\Gamma(x_4) \!=\! \left\{x_4, x_6, x_7 \right\}$ (the yellow-dot subtree), $v_{e_4} = x_4$, and $u_{e_4} = x_1$.}
  \label{fg:Tree}
 \vspace{-10pt}
\end{figure}

\section{Tree-Wasserstein Distances: Optimal Transport with Tree Metrics}\label{sec:TWD}
Lozupone and co-authors \cite{lozupone2005unifrac, lozupone2007quantitative} first noticed, when proposing the \texttt{UniFrac} method in the metagenomics community, that the Wasserstein distance between two measures supported on the nodes of the same tree admits a closed form when the ground metric between the supports of the two measures is a tree metric. That method was used to compare microbial communities by measuring the phylogenetic distance between sets of taxa in a phylogenetic tree as the fraction of the branch length of the tree that leads to descendants from either one environment or the other, but not both \cite{lozupone2005unifrac}. In this section, we follow \cite{do2011sublinear, evans2012phylogenetic, mcgregor2013sketching} to leverage the geometric structure of tree metrics, and recall their main result.


Let $\Tt$ be a tree rooted at $r$ with non-negative edge lengths, and $d_{\TM}$ be the tree metric on $\Tt$. For nodes $x, z \in \Tt$, let $\Pp(x, z)$ be the (unique) path between $x$ and $z$ in $\Tt$, $\lambda$ is the unique Borel measure (i.e. length measure) on $\Tt$ such that $d_{\TM}(x, z) = \lambda(\Pp(x, z))$. We also write $\Gamma(x)$ for a set of nodes in the subtree of $\Tt$ rooted at $x$, defined as $\Gamma(x) = \left\{z \in \Tt \mid x \in \Pp(r, z) \right\}$. For each edge $e$ in $\Tt$, let $v_e$ be the deeper level node of edge $e$ (farther to the root), $u_e$ is the other node, and $w_e = d_{\TM}(u_e, v_e)$ is the non-negative length of that edge, illustrated in Figure~\ref{fg:Tree}. Then, $\TW$ not only has a closed form, but is negative definite.

\begin{proposition}\label{thr:TW_LT}
Given two measures $\mu$, $\nu$ supported on $\Tt$, and setting the ground metric to be $d_{\TM}$, then
\begin{eqnarray}\label{equ:OT_LT}
W_{d_{\TM}}(\mu, \nu) = \sum_{e \in \Tt} w_e \left| \mu(\Gamma(v_e)) - \nu(\Gamma(v_e)) \right|.
\end{eqnarray}
\end{proposition}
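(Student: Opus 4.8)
The plan is to derive \eqref{equ:OT_LT} from the Kantorovich dual \eqref{equ:KantorovichDualityProb}, using the observation that on a tree a function's Lipschitz behaviour is controlled edge by edge. Concretely, any $f:\Tt\to\RR$ is reconstructed from its increments along edges: for a node $x$ the path $\Pp(r,x)$ is unique, so $f(x)=f(r)+\sum_{e\in\Pp(r,x)}\alpha_e$ with $\alpha_e:=f(v_e)-f(u_e)$. If $f\in\Ff_{d_{\TM}}$ then $|\alpha_e|\le d_{\TM}(u_e,v_e)=w_e$; conversely (checked below) \emph{any} family of increments with $|\alpha_e|\le w_e$ defines an element of $\Ff_{d_{\TM}}$. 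Hence the dual maximization decouples across edges.

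For the upper bound, fix $f\in\Ff_{d_{\TM}}$. Since $\mu$ and $\nu$ are probability measures the constant $f(r)$ cancels, and
\begin{equation*}
\int_\Omega f\,d\mu-\int_\Omega f\,d\nu=\sum_{x\in\Tt}\bigl(\mu(\{x\})-\nu(\{x\})\bigr)\!\!\sum_{e\in\Pp(r,x)}\!\!\alpha_e=\sum_{e\in\Tt}\alpha_e\sum_{x\in\Gamma(v_e)}\bigl(\mu(\{x\})-\nu(\{x\})\bigr),
\end{equation*}
where the last equality swaps the two sums and uses that $e\in\Pp(r,x)$ iff $v_e\in\Pp(r,x)$, i.e. $x\in\Gamma(v_e)$. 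The inner sum is exactly $\mu(\Gamma(v_e))-\nu(\Gamma(v_e))$, so the right-hand side is bounded by $\sum_{e\in\Tt}w_e\,\bigl|\mu(\Gamma(v_e))-\nu(\Gamma(v_e))\bigr|$; taking the supremum over $f$ yields ``$\le$'' in \eqref{equ:OT_LT}.

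For the matching lower bound I exhibit an optimal dual function: set $\alpha_e:=w_e\,\sgn\bigl(\mu(\Gamma(v_e))-\nu(\Gamma(v_e))\bigr)$ and $f(x):=\sum_{e\in\Pp(r,x)}\alpha_e$. Feeding this $f$ into the display above turns the inequality into the equality \eqref{equ:OT_LT}, \emph{provided} $f\in\Ff_{d_{\TM}}$. This last point is the only step requiring a (small) argument, and it is where the tree structure is essential: for any $x,z$, the path $\Pp(x,z)$ is a concatenation of edges, so by the triangle inequality $|f(x)-f(z)|\le\sum_{e\in\Pp(x,z)}|\alpha_e|=\sum_{e\in\Pp(x,z)}w_e=\lambda(\Pp(x,z))=d_{\TM}(x,z)$. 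Everything else — the cancellation of $f(r)$, the index swap (cleanest via an indicator $\mathbbm{1}[\,v_e\in\Pp(r,x)\,]$), and, should one want measures that are not finitely supported, replacing the node sums by integrals against $\mu-\nu$ and the edge sum by an integral against the length measure $\lambda$ — is routine bookkeeping. In short, there is no serious obstacle: the content is precisely the equivalence ``Lipschitz along edges $\Leftrightarrow$ Lipschitz for $d_{\TM}$'', which holds because $d_{\TM}$ is the unique-path (shortest-path) metric of $\Tt$.
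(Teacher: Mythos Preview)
Your proof is correct and follows essentially the same route as the paper: both start from the Kantorovich dual \eqref{equ:KantorovichDualityProb}, represent a Lipschitz potential by its increments along root-to-node paths, use the key identity $e\in\Pp(r,x)\Leftrightarrow x\in\Gamma(v_e)$ to swap the order of summation, and then pick the sign edge-by-edge to saturate the bound. The only cosmetic difference is that the paper phrases the increment as a density $\texttt{f}:\Tt\to[-1,1]$ against the length measure $\lambda$ (so $f(x)=\int_{\Pp(r,x)}\texttt{f}\,d\lambda$), whereas you work directly with the edge increments $\alpha_e=f(v_e)-f(u_e)$; these are the same object, with your $\alpha_e$ being the integral of $\texttt{f}$ over edge $e$.
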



\begin{proof}
Following \cite{evans2012phylogenetic}, for any $f \in \Ff_{d_{\mathcal{T}}}$ such that $f(r) = 0$, there is an $\lambda$-a.e. unique Borel function $\texttt{f} : \Tt \rightarrow \left[-1, 1 \right]$ such that $f(x) = \int_{\Pp(r, x)} \texttt{f}(z) \lambda(dz) = \int_{\Tt} \mathbbm{1}_{z \in \Pp(r, x)} \texttt{f}(z) \lambda(dz)$. Intuitively, $f(x)$ models a flow along the (unique) path of the root $r$ and node $x$ where $\texttt{f}(z)$ controls a probability amount, received or provided by $f(x)$ on $dz$. Note that $\mathbbm{1}_{z \in \Pp(r, x)} = \mathbbm{1}_{x \in \Gamma(z)}$, then we have:
\[
\int_{\Tt} f(x)\mu(dx) = \int_{\Tt} \int_{\Tt} \mathbbm{1}_{z \in \Pp(r, x)} \texttt{f}(z) \lambda(dz) \mu(dx) = \int_{\Tt} \texttt{f}(z) \lambda(dz) \mu(\Gamma(z)). 
\]
Then, plugging this identity in Equation~(\ref{equ:KantorovichDualityProb}), we have:
\[
W_{d_{\TM}}(\mu, \nu) = \sup\left\{\int_{\Tt} \left(\mu(\Gamma(z)) - \nu(\Gamma(z)) \right) \texttt{f}(z) \lambda(dz) \right\} = \int_{\Tt} \left|\mu(\Gamma(z)) - \nu(\Gamma(z)) \right| \lambda(dz),
\]
since the optimal function $f^*$ corresponds to $\texttt{f}(z) = 1$ if $\mu(\Gamma(z)) \ge \nu(\Gamma(z))$, otherwise $\texttt{f}(z) = -1$. Moreover, we have $\mu(\Gamma(r)) = \nu(\Gamma(r)) = 1$, and $\lambda(\Pp(u_e, v_e)) = d_{\TM}(u_e, v_e) = w_e$. Therefore,
\[
W_{d_{\TM}}(\mu, \nu) = \sum_{e \in \Tt} w_e \left| \mu(\Gamma(v_e)) - \nu(\Gamma(v_e)) \right|,
\]
since the total mass flowing through edge $e$ is equal to the total mass in subtree $\Gamma(v_e)$.
\end{proof}

\begin{proposition}\label{thr:TW_ND}
The tree-Wasserstein distance $\TWA$ is negative definite.
\end{proposition}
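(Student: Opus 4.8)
The plan is to reduce the claim to a classical scalar fact, using the closed form of Proposition~\ref{thr:TW_LT} together with the elementary stability properties of negative definite kernels. Recall that a symmetric map $D$ on pairs of probability measures on $\Tt$ is \emph{negative definite} if $D(\mu,\mu)=0$ for all $\mu$ and, for every finite family $\mu_1,\dots,\mu_n$ and all $c_1,\dots,c_n\in\RR$ with $\sum_i c_i=0$, one has $\sum_{i,j}c_ic_j\,D(\mu_i,\mu_j)\le 0$. By \eqref{equ:OT_LT} we can write $\TWA(\mu,\nu)=\sum_{e\in\Tt}w_e\,\bigl|\phi_e(\mu)-\phi_e(\nu)\bigr|$ with $\phi_e(\mu):=\mu(\Gamma(v_e))\in\RR$ and $w_e\ge 0$; in particular $\TWA(\mu,\mu)=0$ and $\TWA$ is symmetric, so only the quadratic form needs to be controlled.

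First I would isolate the single-edge, scalar statement: the kernel $(a,b)\mapsto|a-b|$ is negative definite on $\RR$. For this I would use the layer-cake identity $|a-b|=\int_{\RR}\bigl(\mathbbm{1}_{a>t}-\mathbbm{1}_{b>t}\bigr)^2\,dt$, which holds because the integrand equals the indicator of the interval with endpoints $a$ and $b$. Then, for reals $a_1,\dots,a_n$ and $c_1,\dots,c_n$ with $\sum_i c_i=0$, expanding the square and using $\mathbbm{1}_{a_i>t}^2=\mathbbm{1}_{a_i>t}$ gives
\[
\sum_{i,j}c_ic_j\,|a_i-a_j| \;=\; \int_{\RR}\sum_{i,j}c_ic_j\bigl(\mathbbm{1}_{a_i>t}-\mathbbm{1}_{a_j>t}\bigr)^2\,dt \;=\; -2\int_{\RR}\Bigl(\sum_i c_i\,\mathbbm{1}_{a_i>t}\Bigr)^{2}\,dt\;\le\;0,
\]
the two ``diagonal'' contributions cancelling because $\sum_i c_i=0$. (Alternatively, one could invoke Schoenberg's theorem together with the fact that $(a,b)\mapsto e^{-t|a-b|}$ is positive definite for every $t>0$, being the Fourier transform of a Cauchy density.)

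Next I would lift this to measures by two standard facts: if $D$ is negative definite on a set $Y$ and $g\colon X\to Y$ is arbitrary, then $(x,x')\mapsto D(g(x),g(x'))$ is negative definite on $X$ (immediate, since the constraint $\sum_i c_i=0$ is untouched); and a non-negative linear combination of negative definite kernels --- indeed a pointwise limit of such --- is again negative definite. Applying the first with $g=\phi_e$ and the second with weights $w_e\ge 0$ yields, for any $\mu_1,\dots,\mu_n$ and any $c_i$ with $\sum_i c_i=0$,
\[
\sum_{i,j}c_ic_j\,\TWA(\mu_i,\mu_j)\;=\;\sum_{e\in\Tt}w_e\sum_{i,j}c_ic_j\,\bigl|\phi_e(\mu_i)-\phi_e(\mu_j)\bigr|\;\le\;0,
\]
which, combined with $\TWA(\mu,\mu)=0$, gives negative definiteness of $\TWA$.

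I do not anticipate a genuine obstacle: the scalar computation is short and the closure properties are routine. The only point that needs a little care is the bookkeeping when $\Tt$ has infinitely many edges, where one passes from the finite partial sums (each negative definite) to their limit via the pointwise-limit clause; for a finite tree, as in all the constructions used here, even that step is unnecessary.
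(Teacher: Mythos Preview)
Your proof is correct and follows essentially the same route as the paper: use the closed form \eqref{equ:OT_LT} to view $\TWA$ as a weighted $\ell_1$ distance between the feature maps $\mu\mapsto(\mu(\Gamma(v_e)))_e$, then invoke the negative definiteness of $|a-b|$ together with the obvious closure under pullbacks and non-negative sums. The only cosmetic difference is that you prove the scalar fact $(a,b)\mapsto|a-b|$ is negative definite directly via the layer-cake identity, whereas the paper cites \cite[Corollary~2.10]{berg1984harmonic} applied to the negative definite kernel $(a-b)^2$.
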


\begin{proof}
Let $m$ be the number of edges in tree $\Tt$. From Equation~(\ref{equ:OT_LT}), $\mu(\Gamma(v_e))$ with $e \in \Tt$ can be considered as a feature map for probability distribution $\mu$ onto $\RR_{+}^{m}$. Consequently, the $\TW$ distance is equivalent to a weighted $\ell_1$ distance between these representations, with non-negative weights $w_e$, between these feature maps. Therefore, the tree-Wasserstein distance is negative definite\footnote{We follow here \cite[p. 66--67]{berg1984harmonic}, to define negative-definiteness, see review about kernels in the appendix.}.
\end{proof}

\section{Tree-Sliced Wasserstein by Sampling Tree Metrics}\label{sec:TSW}

Much as in sliced-Wasserstein (\SW) distances, computing \TW~distances requires choosing or sampling tree metrics. Unlike \SW~distances however, the space of possible tree metrics is far too large in practical cases to expect that purely random trees can lead to meaningful results. We consider in this section two adaptive methods to define tree metrics based on spatial information in both low and high-dimensional cases, using partitioning or clustering. We further average the \TW~distances corresponding to these ground tree metrics. This has the benefit of reducing quantization effects, or cluster sensitivity problems in which data points may be partitioned or clustered to adjacent but different hypercubes~\cite{indyk2003fast} or clusters respectively. We then define the tree-sliced Wasserstein kernel, that is the direct generalization of those considered by \cite{carriere17asliced, kolouri2016sliced}.

\begin{definition}\label{def:TSW}
For two measures $\mu,\nu$ supported on a set in which tree metrics $\left\{d_{{\TM}_i} \mid 1 \le i \le n \right\}$ can be defined, the tree-sliced-Wasserstein (\TSW) distance is defined as:
\begin{equation}\label{equ:TSW}
\TSW(\mu, \nu) = \frac{1}{n} \sum_{i=1}^n W_{d_{\TM_i}}(\mu, \nu).
\end{equation}
\end{definition}

Note that averaging of negative definite functions is trivially negative definite. Thus, following Definition~\ref{def:TSW} and Proposition~\ref{thr:TW_ND}, the \TSW~distance is also negative definite. Positive definite kernels can be therefore derived following \cite[Theorem 3.2.2, p.74]{berg1984harmonic}, and given $t>0$, $\mu, \nu$ on tree $\Tt$, we define the following tree-sliced-Wasserstein kernel,
\begin{equation}\label{equ:kTSW}
k_{\TSW}\!\left(\mu, \nu\right) = \exp\!\left(-t\, \TSW\!\left(\mu, \nu\right)\right).
\end{equation}
Very much like the Gaussian kernel, one can tune if needed the bandwidth parameter $t$ according to the learning task that is targeted, using e.g. cross validation. 



\paragraph{Adaptive methods to define tree metrics for the space of support data.} We consider sampling mechanisms to select tree metrics to be used in Definition~\ref{def:TSW}. 

One possibility is to sample tree metrics following the general idea that these tree metrics should approximate the original distance \cite{bartal1996probabilistic, bartal1998approximating, charikar1998approximating, fakcharoenphol2004tight, indyk2001algorithmic}. This was the original motivation for previous work focusing on approximating the \OT~distance with the Euclidean ground metric (a.k.a. $W_2$ metric) into $\ell_1$ metric for fast nearest neighbor search \cite{dong2019scalable, indyk2003fast}. Our goal is rather to sample tree metrics for the space of supports, and use those random tree metrics as ground metrics.  Much like $1$-dimensional projections do not offer interesting properties from a distortion perspective but remain useful for sliced-Wasserstein (\SW) distance, we believe that trees with large distortions can be useful. This follows the recent realization that solving exactly the \OT~problem leads to overfitting~\cite[\S8.4]{PeyreCuturiBook}, and therefore excessive efforts to approximate the ground metric using trees would be self-defeating since it would lead to overfitting within the computation of the Wasserstein metric itself.


\subparagraph{$\bullet$ Partition-based tree metrics.} For low-dimensional spaces of supports, one can construct a partition-based tree metric with a tree structure $\Tt$ as follows: 

\begin{algorithm}[H] 
\caption{\texttt{Partition\_HC}($\texttt{s}, X, \tilde{x}_\texttt{s}, \texttt{h}, H_{\Tt}$)} 
\label{alg:RP_TM} 
\begin{algorithmic}[1] 
    \REQUIRE \texttt{s}: side-$\ell$ hypercube, $X$: set of $m$ data points of $\RR^{\texttt{d}}$ in $\texttt{s}$, $\tilde{x}_\texttt{s}$: parent node, $\texttt{h}$: a current depth level, and $H_{\Tt}$: the predefined deepest level of tree $\Tt$.
     \IF{$m > 0$}
     	\IF{$\texttt{h} > 0$}	
		\STATE Node $\tilde{x}_{c}$ $\leftarrow$ a point center of $\texttt{s}$.
		\STATE Length of edge $(\tilde{x}_{\texttt{s}}, \tilde{x}_{c})$ $\leftarrow$ distance $(\tilde{x}_{\texttt{s}}, \tilde{x}_{c})$.
	\ELSE
		\STATE Node $\tilde{x}_{c} \leftarrow \tilde{x}_\texttt{s}$.
	\ENDIF
	\IF{$m > 1$ and $\texttt{h} < H_{\Tt}$}
     		\STATE Partition $\texttt{s}$ into $2^{\texttt{d}}$ side-$(\ell/2)$ hypercubes.
		\FOR{\textbf{each} side-$(\ell/2)$ hypercube $\texttt{s}_c$}
			\STATE $\tilde{X}_c \leftarrow$ data points of $X$ in $\texttt{s}_c$. 
			\STATE \texttt{Partition\_HC}($\texttt{s}_c, \tilde{X}_c, \tilde{x}_{c}, \texttt{h} + 1, H_{\Tt}$).
		\ENDFOR
	\ENDIF
     \ENDIF	
\end{algorithmic}
\end{algorithm}

Assume that data points are in a side-$(\beta/2)$ hypercube of $\RR^{\texttt{d}}$. We then randomly expand it into a hypercube $\texttt{s}_0$ with side at most $\beta$. Inspired by a series of grids in \cite{indyk2003fast}, we set the center of $\texttt{s}_0$ as the root of $\Tt$, and use a following recursive procedure to partition $\texttt{s}_0$. For each side-$\ell$ hypercube $\texttt{s}$, there are $3$ partitioning cases: (i) if $\texttt{s}$ does not contain any data points, we discard it, (ii) if $\texttt{s}$ contains $1$ data point, we use the center of $\texttt{s}$ (or the data point) as a node in $\Tt$, and (iii) if $\texttt{s}$ contains more than $1$ data point, we represent $\texttt{s}$ by its center as a node $x$ in $\Tt$, and equally partition $\texttt{s}$ into $2^{\texttt{d}}$ side-$(\ell/2)$ hypercubes for potential child nodes of $x$. We then apply the recursive partition procedure for those child hypercubes. One can use any metrics in $\RR^{\texttt{d}}$ to obtain lengths for edges in $\Tt$. Additionally, one can use a predefined deepest level of $\Tt$ as a stopping condition for the procedure. We summarize the recursive tree construction procedure in Algorithm~\ref{alg:RP_TM}. As desired, the random expansion of the original hypercube into $\texttt{s}_0$ creates a variety to partition data spaces. Note that Algorithm~\ref{alg:RP_TM} for constructing tree $\Tt$ is also known as the classical \texttt{Quadtree} algorithm \cite{samet1984quadtree} for 2-dimensional data (and later extended for high-dimensional data in \cite{backurs2019scalable, indyk2001algorithmic, indyk2017practical, indyk2003fast}).

\subparagraph{$\bullet$ Clustering-based tree metrics.} As in Algorithm~\ref{alg:RP_TM}, the number of partitioned hypercubes grows exponentially with respect to dimension ${\texttt{d}}$. To overcome this problem for high-dimensional spaces, we directly leverage the distribution of support data points to adaptively partition data spaces via clustering, inspired by the clustering-based approach for a space subdivision in Improved Fast Gauss Transform \cite{morariu2009automatic, yang2005efficient}. We derive a similar recursive procedure as in the partition-based tree metrics, but apply the farthest-point clustering \cite{gonzalez1985clustering} to partition support data points, and replace centers of hypercubes by cluster centroids as nodes in $\Tt$. In practice, we fix the same number of clusters $\kappa$ when performing the farthest-point clustering (replace the partition in line $9$ in Algorithm~\ref{alg:RP_TM}). $\kappa$ is typically chosen via cross-validation. In general, one can apply any favorite clustering methods. We use the farthest-point clustering due to its fast computation. In particular, the complexity of the farthest-point clustering into $\kappa$ clusters for $n$ data points is $O\!\left(n\log\kappa\right)$ using the algorithm in \cite{feder1988optimal}. Using different random initializations for the farthest-point clustering, we recover a simple sampling mechanism to obtain random tree metrics.

\section{Relations to Other Work}\label{sec:relations}

\paragraph{OT with ground ultrametrics.} An ultrametric is also known as non-Archimedean metric, or isosceles metric \cite{shkarin2004isometric}. Ultrametrics strengthen the triangle inequality to a strong inequality (i.e., for any $x, y, z$ in an ultrametric space, $d(x, z) \le \max\!\left(d(x,y), d(y, z) \right)$). Note that binary metrics are a special case of ultrametrics since binary metrics satisfy the strong inequality. Following \cite[\S1, p.245--247]{johnson1967hierarchical}, an ultrametric implies a tree structure which can be constructed by hierarchical clustering schemes. Therefore, an ultrametric is a tree metric. Furthermore, we note that ultrametrics have similar spirits with strong kernels and hierarchy-induced kernels which are key components to form valid optimal assignment kernels for graph classification applications \cite{kriege2016valid}.



\paragraph{Connection with OT with Euclidean ground metric $W_2(\cdot, \cdot)$.} Let $d_{\TM}^{H}$ be a partition-based tree metric where $H$ is the depth level of corresponding tree $\Tt$, at which all support data points are separated into different hypercubes (i.e., Algorithm~\ref{alg:RP_TM} stops at depth level $H$). Edges in $\Tt$ are computed by Euclidean distance. Let $\beta$ be the side of the randomly expanded hypercube. Given two $\texttt{d}$-dimensional point clouds $\tilde{\mu}, \tilde{\nu}$ with the same cardinality (i.e., discrete uniform measures), and denote $\TW$ with $d_{\TM}^{H}$ as $W_{d_{\TM}^H}$. Then, 
\[
W_2(\tilde{\mu}, \tilde{\nu}) \le W_{d_{\TM}^H}(\tilde{\mu}, \tilde{\nu})/2 + \beta \sqrt{\texttt{d}}/2^H.
\]
The proof is given in the appendix material. Moreover, we also investigate the empirical relation between the \TSW~distance and the $W_2$ distance in the appendix material, in which empirical results indicate that the \TSW~distance agrees more with $W_2$ as the number of tree-slices used to define the \TSW~distance is increased.

\paragraph{Connection with embedding $W_2$ metric into $\ell_1$ metric for fast nearest neighbor search.} As discussed earlier, our goal is neither to approximate \OT~distance using trees as in \cite{bartal1996probabilistic, bartal1998approximating, charikar1998approximating, fakcharoenphol2004tight, indyk2001algorithmic}, nor to embed $W_2$ metric into $\ell_1$ metric as in \cite{dong2019scalable, indyk2003fast}, but rather to sample tree metrics to define an extended variant of the sliced-Wasserstein distance.  When using the \texttt{Quadtree} algorithm (as in Algorithm~\ref{alg:RP_TM}) to sample tree metrics for the \TSW~distance, then the resulted \TSW~distance is in the same spirit as the embedding approach in \cite{indyk2003fast} where the authors embedded $W_2$ metric into $\ell_1$ metric by using a series of grids.


\paragraph{OT with tree metrics.} There are a few work related to our considered class of $\OT$ with tree metrics \cite{kloeckner2015geometric, sommerfeld2018inference}. In particular, Kloeckner \cite{kloeckner2015geometric} studied geometric properties of $\OT$ space for measures on an ultrametric space, and Sommerfeld and Munk \cite{sommerfeld2018inference} focused on statistical inference for empirical $\OT$ on finite spaces including tree metrics. 



\section{Experimental Results}\label{sec:experiments}

In this section, we evaluated the proposed \TSW~kernel $k_{\TSW}$ (Equation~\eqref{equ:kTSW}) for comparing empirical measures in word embedding-based document classification and topological data analysis.

\subsection{Word Embedding-based Document Classification}\label{sec:documentclassification}
Kusner et al. \cite{kusner2015word} proposed Word Mover's distances for document classification. Each document is regarded as an empirical measure where each word and its frequency are considered as a support and a corresponding weight respectively. Kusner et al. \cite{kusner2015word} used word embedding such as \textit{word2vec} to map each word to a vector data point. Equivalently, Word Mover's distances are $\OT$ metrics between empirical measures (i.e., documents) where its ground cost is a metric on the word embedding space.

\paragraph{Setup.} We evaluated $k_{\TSW}$ on four datasets: \texttt{TWITTER}, \texttt{RECIPE}, \texttt{CLASSIC} and \texttt{AMAZON}, following the approach of Word Mover's distances~\cite{kusner2015word}, for document classification with SVM. Statistical characteristics for those datasets are summarized in Figure~\ref{fg:Time}. We used the \textit{word2vec} word embedding \cite{mikolov2013distributed}, pre-trained on Google News\footnote{https://code.google.com/p/word2vec}, containing about $3$ million words/phrases. \textit{word2vec} maps these words/phrases into vectors in $\RR^{300}$. Following \cite{kusner2015word}, for all datasets, we removed all SMART stop word \cite{salton1988term}, and further dropped words in documents if they are not available in the pre-trained \textit{word2vec}. We used two baseline kernels in the form of $\exp(-td)$ where $d$ is a document distance and $t>0$, for two corresponding baseline document distances based on Word Mover's: (i) \OT~with Euclidean ground metric \cite{kusner2015word}, and (ii) sliced-Wasserstein, denoted as $k_{\OT}$ and $k_{\SW}$ respectively. For \TSW~distance in $k_{\TSW}$, we consider $n_s$ randomized clustering-based tree metrics, built with a predefined deepest level $H_{\Tt}$ of tree $\Tt$ as a stopping condition. We also regularized for kernel $k_{\OT}$ matrices due to its indefiniteness by adding a sufficiently large diagonal term as in \cite{cuturi2013sinkhorn}. For SVM, we randomly split each dataset into $70\%/30\%$ for training and test with $100$ repeats, choose hyper-parameters through cross validation, choose $1/t$ from $\left\{1, q_{10}, q_{20}, q_{50} \right\}$ where $q_s$ is the $s\%$ quantile of a subset of corresponding distances, observed on a training set, use one-vs-one strategy with Libsvm \cite{chang2011libsvm} for multi-class classification, and choose SVM regularization from $\left\{10^{-2:1:2} \right\}$. We ran experiments with Intel Xeon CPU E7-8891v3 (2.80GHz), and 256GB RAM.  


\subsection{Topological Data Analysis (TDA)}\label{sec:TDA}

\TDA~has recently gained interest within the machine learning community \cite{carriere17asliced,kusano2017kernelJMLR, le2018persistence,reininghaus2015stable}. \TDA~is a powerful tool for statistical analysis on geometric structured data such as linked twist maps, or material data. \TDA~employs algebraic topology methods, such as persistence homology, to extract robust topological features (i.e., connected components, rings, cavities) and output $2$-dimensional point multisets, known as persistence diagrams (\PD) \cite{edelsbrunner2008persistent}. Each $2$-dimensional point in \PD~summarizes a lifespan, corresponding to birth and death time as its coordinates, of a particular topological feature.


\paragraph{Setup.}  
We evaluated $k_{\TSW}$ for orbit recognition and object shape classification with support vector machines (SVM), as well as change point detection for material data analysis with kernel Fisher discriminant ratio (KFDR)~\cite{harchaoui2009kernel}. Generally, we followed the same setting as in \cite{le2018persistence} for these \TDA~experiments. We considered five baseline kernels for \PD: (i) persistence scale space ($k_{\PSS}$)~\cite{reininghaus2015stable}, (ii) persistence weighted Gaussian ($k_{\PWG}$)~\cite{kusano2017kernelJMLR}, (iii) sliced-Wasserstein ($k_{\SW}$)~\cite{carriere17asliced}, (iv) persistence Fisher ($k_{\PF}$) \cite{le2018persistence}, and (v) optimal transport\footnote{We used a fast \OT~implementation (e.g. on MPEG7 dataset, it took $7.98$ seconds while the popular mex-file with Rubner's implementation required $28.72$ seconds).}, defined as $k_{\OT} = \exp(-t d_{\OT})$ for $t>0$, and also further regularized its kernel matrices by adding a sufficiently large diagonal term due to its indefiniteness as in \S\ref{sec:documentclassification}. For \TSW~distance in $k_{\TSW}$, we considered $n_s$ randomized partition-based tree metrics, built with a predefined deepest level $H_{\Tt}$ of tree $\Tt$ as a stopping condition. 


Let $\Dg_i = (x_1, x_2, \dotsc, x_n)$ and $\Dg_j = (z_1, z_2, \dotsc, z_m)$ be two \PD~where $x_i \mid_{1 \le i \le n}, z_j \mid_{1 \le j \le m} \in \RR^2$, and $\Theta = \{(a, a) \mid a \in \RR\}$ be the diagonal set. Denote $\Dg_{i\Theta} = \{\Pi_{\Theta}(x) \mid x \in \Dg_i \}$ where $\Pi_{\Theta}(x)$ is a projection of $x$ on $\Theta$. As in $\SW$ distance between $\Dg_i$ and $\Dg_j$ \cite{carriere17asliced}, we use transportation plans between $(\Dg_i \cup \Dg_{j\Theta})$ and $(\Dg_j \cup \Dg_{i\Theta})$ for \TW~(in Equation~\eqref{equ:TSW} of \TSW) and \OT~distances. We typically used a cross validation to choose hyper-parameters, and followed corresponding authors of those baseline kernels to form sets of candidates. For $k_{\TSW}$ and $k_{\OT}$, we chose $1/t$ from $\left\{1, q_{10}, q_{20}, q_{50} \right\}$. Similar as in \S\ref{sec:documentclassification}, we used one-vs-one strategy with Libsvm for multi-class classification, $\left\{10^{-2:1:2} \right\}$ as a set of regularization candidates, and a random split $70\%/30\%$ for training and test with $100$ repeats for SVM, and DIPHA toolbox\footnote{https://github.com/DIPHA/dipha} to extract \PD. 


\paragraph{Orbit recognition.} Adams et al. \cite[\S 6.4.1]{adams2017persistence} proposed a synthesized dataset for link twist map, a discrete dynamical system to model flows in DNA microarrays \cite{hertzsch2007dna}. There are $5$ classes of orbits. As in \cite{le2018persistence}, we generated $1000$ orbits for each class where each orbit contains $1000$ points. We considered $1$-dimensional topological features for \PD, extracted with Vietoris-Rips complex filtration   \cite{edelsbrunner2008persistent}.

\paragraph{Object shape classification.} We evaluated object shape classification on a $10$-class subset of MPEG7 dataset \cite{latecki2000shape}, containing $20$ samples for each class as in \cite{le2018persistence}. For simplicity, we used the same procedure as in \cite{le2018persistence} to extract $1$-dimensional topological features for \PD~with Vietoris-Rips complex filtration\footnote{Turner et al. \cite{turner2014persistent} proposed a more complicated and advanced filtration for this task.} \cite{edelsbrunner2008persistent}. 


\paragraph{Change point detection for material data analysis.} We considered granular packing system~\cite{francois2013geometrical} and SiO$_2$~\cite{nakamura2015persistent} datasets for change point detection problem with KFDR as a statistical score. As in \cite{kusano2017kernelJMLR, le2018persistence}, we extracted $2$-dimensional topological features for \PD~in granular packing system dataset, $1$-dimensional topological features for \PD~ in SiO$_2$ dataset, both with ball model filtration, and set $10^{-3}$ for the regularization parameter in KFDR. KFDR graphs for these datasets are shown in Figure \ref{fg:KFDR}. For granular tracking system dataset, all kernel approaches obtain the change point as the $23^{rd}$ index, which support an observation result (corresponding id = $23$) in \cite{anonymous72} . For SiO$_2$ dataset, results of all kernel methods are within a supported range ($35 \le$ id $\le 50$), obtained by a traditional physical approach \cite{elliott1983physics}. The KFDR results of $k_{\TSW}$ compare favorably with those of other baseline kernels. As shown in Figure~\ref{fg:Time}, $k_{\TSW}$ is faster than other baseline kernels. We note that we omit the baseline kernel $k_{\OT}$ for this application since computation of \OT~distance is out of memory.

\subsection{Results of SVM, Time Consumption and Discussion} 


The results of SVM and time consumption for kernel matrices in \TDA, and word embedding based document classification are illustrated in Figure~\ref{fg:ACC} and Figure~\ref{fg:Time} respectively. The performances of $k_{\TSW}$ compare favorably with other baseline kernels. Moreover, the computational time of $k_{\TSW}$ is much less than that of $k_{\OT}$. Especially, in \texttt{CLASSIC} dataset, it took less than $3$ hours for $k_{\TSW}$ while more than $8$ days for $k_{\OT}$. Note that $k_{\TSW}$ and $k_{\SW}$ are positive definite while $k_{\OT}$ is not. The indefiniteness of $k_{\OT}$ may affect its performances in some applications, e.g. $k_{\OT}$ performs worse in TDA applications, but works well for documents with word embedding applications. The fact that $\SW$ only considers $1$-dimensional projections may limit its ability to capture high-dimensional structure in data distributions \cite{csimcsekli2018sliced}. \TSW~distance remedies this problem by using clustering-based tree metrics which directly leverage distributions of support data points. Furthermore, we also illustrate a trade-off of performances and computational time for different parameters in tree-sliced-Wasserstein distances for $k_{\TSW}$ on \texttt{TWITTER} dataset in Figure~\ref{fg:TWITTERKTW}. For tree-sliced-Wasserstein $\TSW$ for $k_{\TSW}$, performances are usually improved with more slices ($n_s$), but they come with a trade-off of more computational time. In these applications, we observed that a good trade-off for $n_s$ of tree-sliced-Wasserstein is about $10$ slices. Many further results can be seen in the appendix.

\begin{figure}
    \centering
    \begin{subfigure}[t]{0.77\textwidth}
        \centering
        \includegraphics[width=\linewidth]{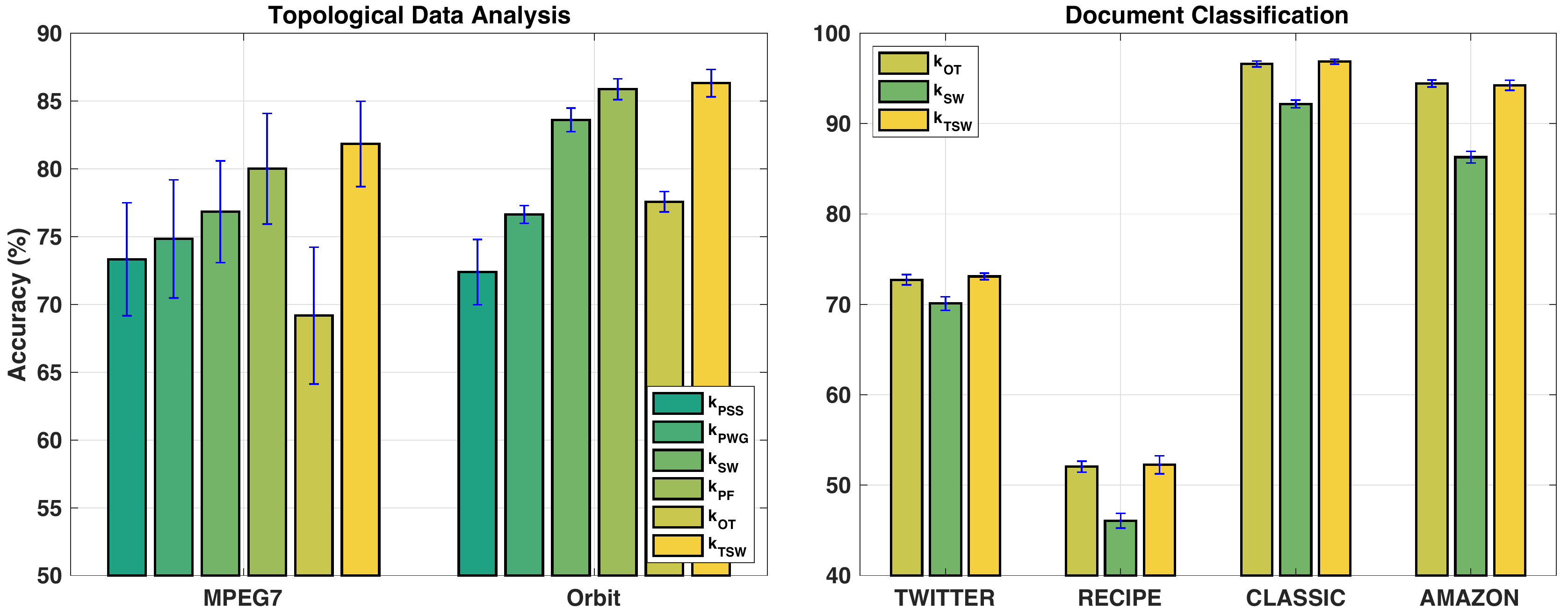} 
        \caption{SVM results for $\TDA$ and document classification.} \label{fg:ACC}
    \end{subfigure}
    
     \begin{subfigure}[t]{0.52\textwidth}
    \centering
        \includegraphics[width=\linewidth]{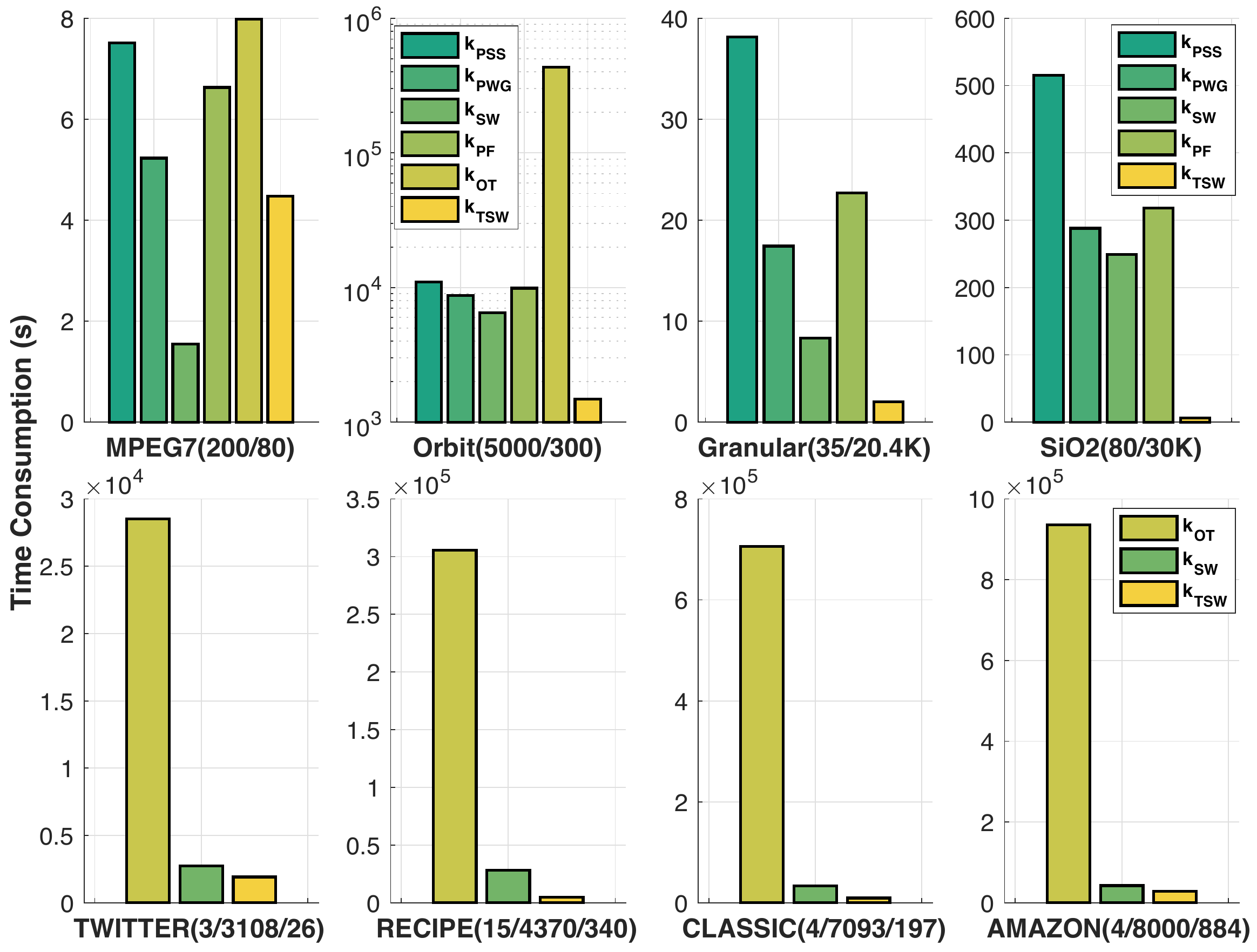} 
        \caption{Corresponding time consumption of kernel matrices for $\TDA$ and document classification.} \label{fg:Time}
    \end{subfigure}
    \hfill
    \begin{subfigure}[t]{0.46\textwidth}
        \centering
        \includegraphics[width=\linewidth]{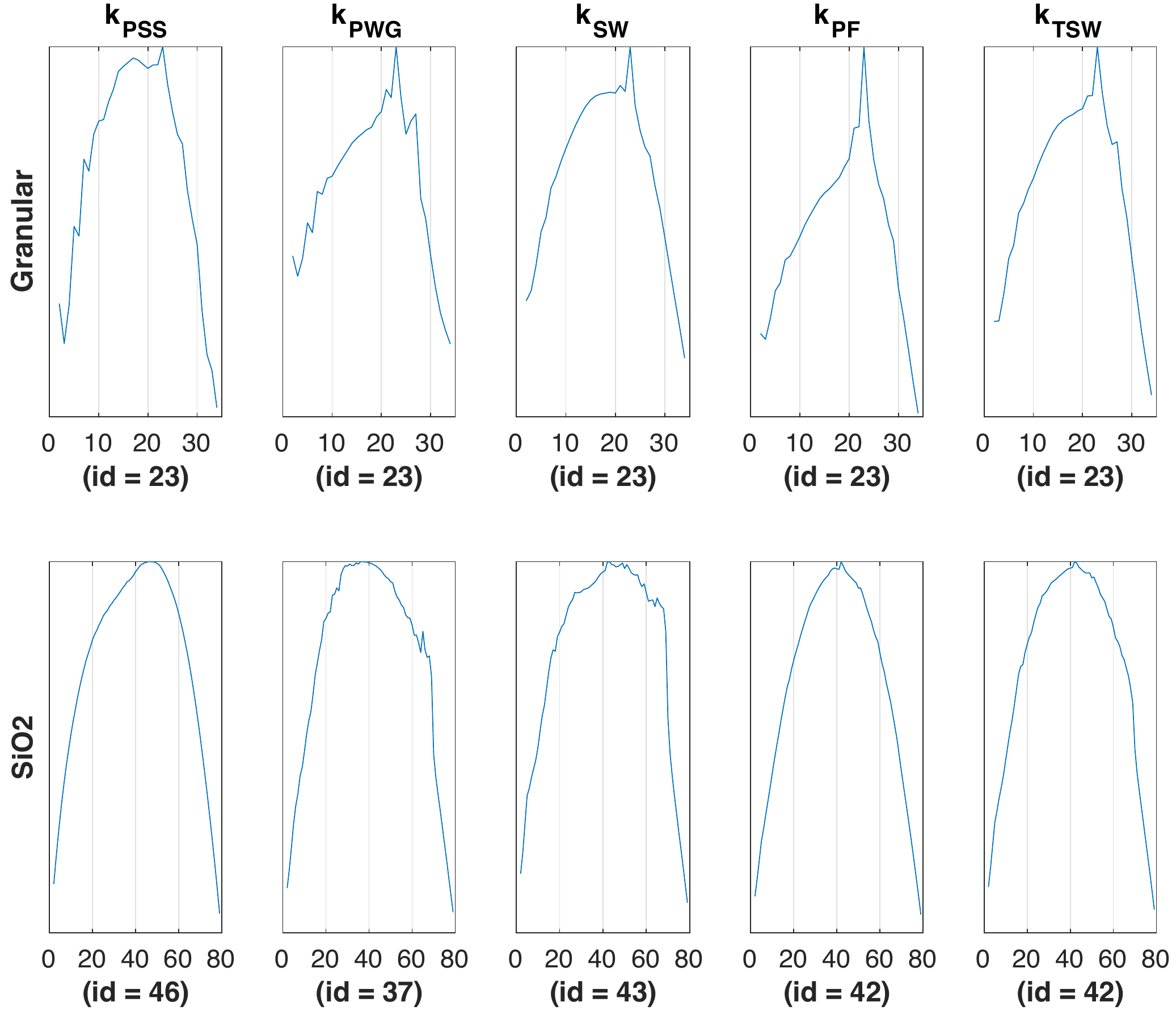} 
        \caption{The KFDR graphs on granular packing system and SiO$_2$ datasets.} \label{fg:KFDR}
    \end{subfigure}
    
    \caption{Experimental results. In (\subref{fg:ACC}), for a trade-off between time consumption and performances, results of $\TDA$ are reported for $k_{\TSW}$ with $(n_s\!=\!6, H_{\Tt}\!=\!6)$, and $(n_s\!=\!12, H_{\Tt}\!=\!5)$ in MPEG7 and Orbit datasets respectively. For document classification, results are reported for $k_{\SW}$ with $(n_s\!=\!20)$, and for $k_{\TSW}$ with $(n_s\!=\!10, H_{\Tt}\!=\!6, \kappa\!=\!4)$. In (\subref{fg:Time}), the numbers in the parenthesis: for $\TDA$ (the first row), are the number of $\PD$ and the maximum number of points in $\PD$ respectively; for document classification (the second row), are the number of classes, the number of documents, and the maximum number of unique words for each document respectively. In (\subref{fg:KFDR}), for $k_{\TSW}$, \TSW~distances are computed with $(n_s\!=\!12, H_{\Tt}\!=\!6)$.}
\end{figure}

    \begin{figure}
    \centering
        \includegraphics[width=\linewidth]{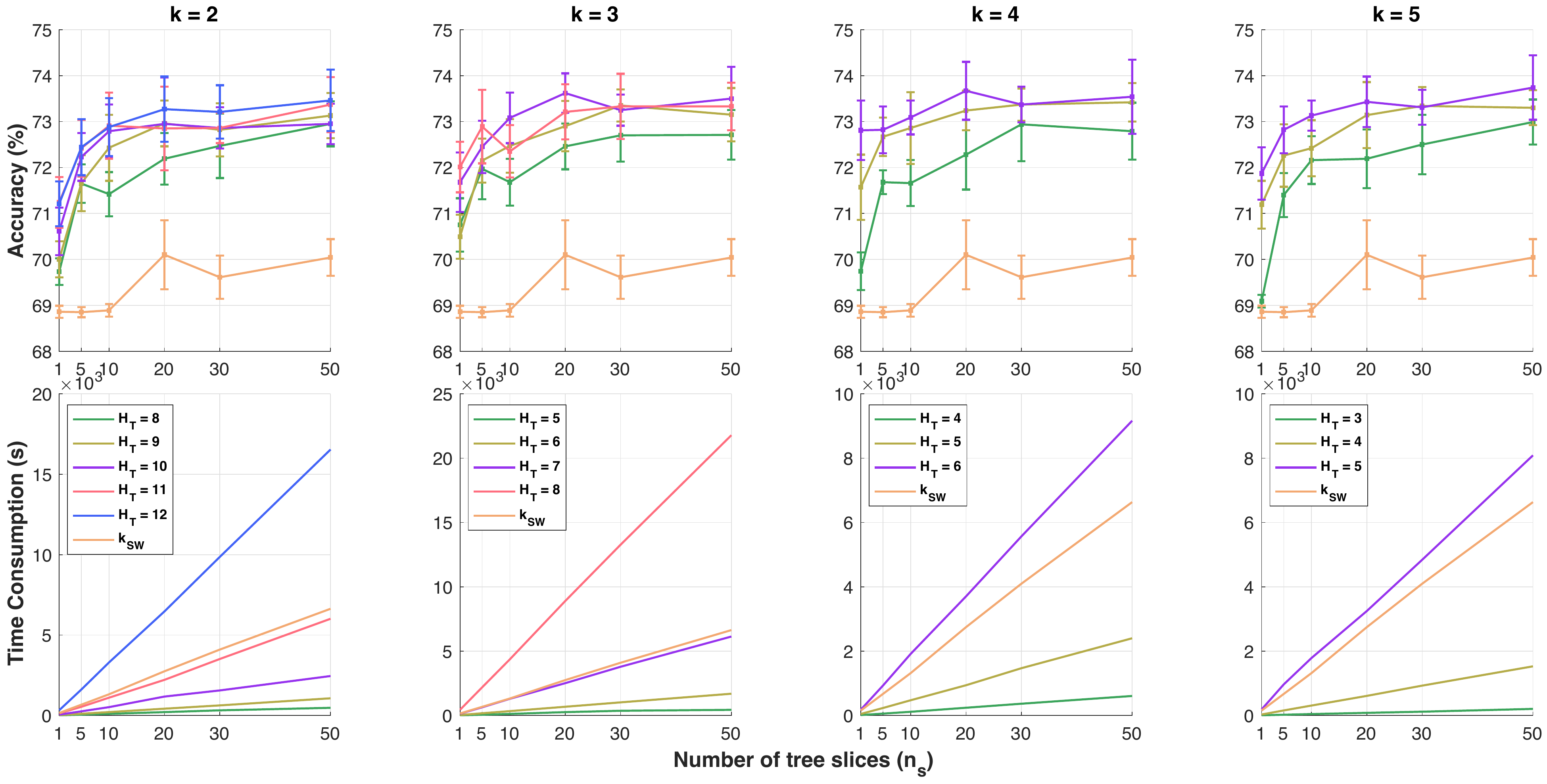} 
        \caption{SVM results and time consumption of kernel matrices of $k_{\TSW}$ with different $(n_s, H_{\Tt}, \kappa)$, and $k_{\SW}$ with different $n_s$ on \texttt{TWITTER} dataset.}
         \label{fg:TWITTERKTW}
    \end{figure}
    
\section{Conclusion}\label{sec:conclusion}

In this work, we proposed positive definite tree-(sliced)-Wasserstein kernel on $\OT$ geometry by considering a particular class of ground metrics, namely tree metrics. Much like the univariate Wasserstein distance, the tree-(sliced)-Wasserstein distance has a closed form, and is also negative definite. We also provide two sampling schemes to generate tree metrics for both high-dimensional and low-dimensional spaces. Leveraging random tree-metrics, we have proposed a new generalization of sliced-Wasserstein metrics that has more flexibility and degrees of freedom, by choosing a tree rather than a line, especially in high-dimensional spaces. The questions of sampling efficiently tree metrics from data points for tree-sliced-Wasserstein distance, as well as using them for more involved parametric inference are left for future work.

\subsubsection*{Acknowledgments}
We thank anonymous reviewers for their comments. TL acknowledges the support of JSPS KAKENHI Grant number 17K12745. MY was supported by the JST PRESTO program JPMJPR165A.

\appendix

\section{Detailed Proofs}
In this section, we give detailed proofs for the inequality in the connection with \OT~with Euclidean ground metric (i.e. $W_2$ metric) for \TW~distance, and investigate an empirical relation between \TSW~and $W_2$ metric, especially when one increases the number of tree-slices in \TSW. Additionally, we also provide proofs for negative definiteness of $\ell_1$ distance (used in the proof of Proposition 2 in the main text), and indefinite divisibility for \TSW~kernel.

\subsection{Proof of: $W_2(\tilde{\mu}, \tilde{\nu}) \le W_{d_{\TM}^H}(\tilde{\mu}, \tilde{\nu})/2 + \beta \sqrt{\texttt{d}}/2^H$}\label{sec:connectionWL2} 

For two point clouds $\tilde{\mu}, \tilde{\nu}$ containing $n$ data points $x_i \mid_{1 \le i \le n}, z_j \mid_{1 \le j \le n}$ respectively, let $c$ be a ground cost metric, and $\Sigma_n$ be the set of all permutations of $n$ elements, the \OT~can be reformulated as an optimal assignment problem as follow:
\begin{equation}\label{equ:OptimalAssignment}
W_{c}(\tilde{\mu}, \tilde{\nu}) = \min_{\sigma \in \Sigma_n} \frac{1}{n}\sum_{i=1}^n c(x_i, z_{\sigma(i)}).
\end{equation}

At a height level $i$ in $\Tt$, the maximum Euclidean distance between any two data points in a same hypercube, denoted as $\Delta_i$, we have 
\[
\Delta_i = \beta \sqrt{\texttt{d}} / 2^i. 
\]

Let $E_{i(i+1)}$ be a set of all edges between a height level $i$ and a height level $(i+1)$ in $\Tt$. So, for any $e \in E_{i(i+1)}$, we have 
\[
w_e = \beta \sqrt{\texttt{d}} / 2^{i+1}.
\] 

Let $q_i$ be the number of matched pairs at a height level $i$. Consequently, $(n - q_i)$ is the number of unmatched pairs at the height level $i$. Moreover, for the number of unmatched pairs at the height level $i$, observe that
\begin{equation}\label{equ:UM}
n - q_i = \frac{1}{2}\sum_{e \in E_{(i-1)i}} {\left| \tilde{\mu}(\Gamma(v_e)) - \tilde{\nu}(\Gamma(v_e)) \right|}.
\end{equation}
In the right hand side of Equation~(\ref{equ:UM}), for all edges between the height level $(i-1)$ and $i$ of tree $\Tt$. Note that the total mass in subtree $\Gamma(v_e)$ is equal to the total mass flowing through edge $e$, we have $\Pp(r, x) \mid_{x \in \tilde{\mu}}$ and $\Pp(r, z) \mid_{z \in \tilde{\nu}}$ count the total number of visits on each edge in $E_{i(i+1)}$ for $\tilde{\mu}$ and $\tilde{\nu}$ respectively, and their absolute different number is twice to the number of unmatched pairs at the height level $i$, as described in Equation~(\ref{equ:UM}).

Therefore, the \TW~distance is as follow:
\begin{equation}\label{equ:dTW}
W_{d_{\TM}^H}(\tilde{\mu}, \tilde{\nu}) = \frac{1}{n}\!\sum_{i=0}^{H-1}{2w_{\bar{e}_{i(i+1)}} (n - q_{i+1})} = \frac{1}{n}\!\sum_{i=0}^{H-1} {\Delta_i(n - q_{i+1})},
\end{equation}
where $\bar{e}_{i(i+1)}$ is an edge in $E_{i(i+1)}$, and note that $\Delta_i = 2w_{\bar{e}_{i(i+1)}}$.

Moreover, we have $\left(q_i - q_{i+1} \right)$ is the number of pairs matched at a height level $i$, but unmatched at a height level $(i+1)$. Additionally, note that $q_0 = n$, $q_H=0$, and $\Delta_{i} = \Delta_{i-1}/2$, then we have
\begin{eqnarray}\label{eq:BoundOT}
W_2(\tilde{\mu}, \tilde{\nu}) \le \frac{1}{n}\!\sum_{i=0}^{H-1} {\Delta_i \left(q_i - q_{i+1} \right)} \qquad \qquad \qquad \qquad \qquad \quad \,\,\,\,  \\
= W_{d_{\TM}^H}(\tilde{\mu}, \tilde{\nu}) - \frac{1}{n}\!\sum_{i=0}^{H-1}{\Delta_i(n-q_i)} \qquad \qquad \qquad  \\
= W_{d_{\TM}^H}(\tilde{\mu}, \tilde{\nu}) - \frac{1}{n}\!\sum_{i=1}^{H}{\Delta_{i-1}(n-q_i)/2 + \Delta_H} \quad \,\,\,  \\
= W_{d_{\TM}^H}(\tilde{\mu}, \tilde{\nu})/2 + \beta \sqrt{\texttt{d}}/2^H. \qquad \qquad \qquad \qquad \,\, 
\end{eqnarray}
For the first equal, we added and subtracted $n$ for the term in the parenthesis and note Equation~(\ref{equ:dTW}) for $W_{d_{\TM}^H}$. For the second equal, in the second term, for the element with $i=0$, note that $(n - q_0) = 0$, we added and subtracted the element with $i=H$. For the third equal, we grouped the first two terms and note that $\Delta_H = \beta \sqrt{\texttt{d}} / 2^H$ for the third term.

\begin{figure}
  \begin{center}
    \includegraphics[width=0.7\textwidth]{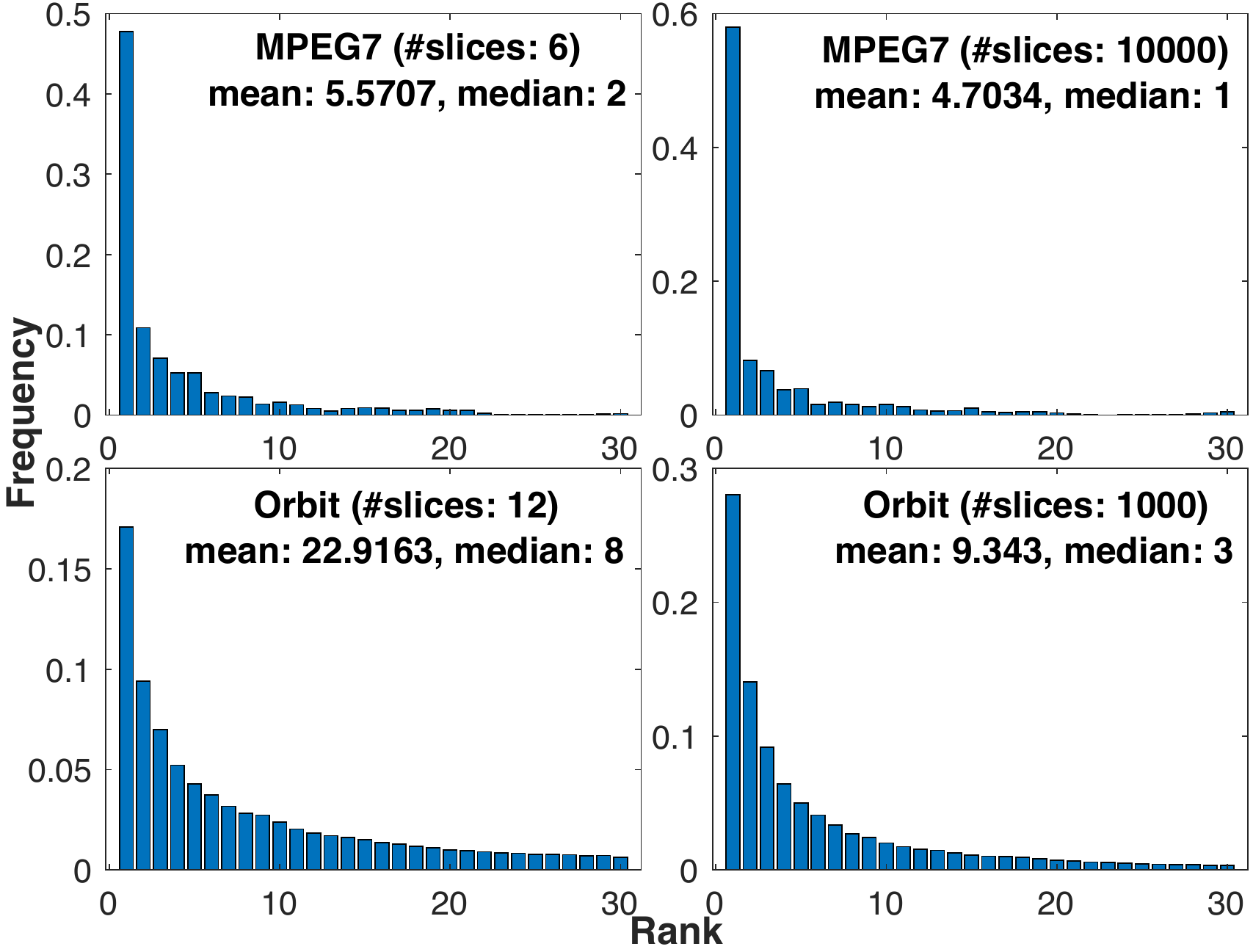}
  \end{center}
  \vspace{-6pt}
  \caption{A frequency of near-neighbor rank on the $W_2$ space for the nearest neighbor w.r.t. TSW.}
  \label{fg:RankOrderTSW}
\end{figure}

\paragraph{Empirical relation between TSW and OT with Euclidean ground metric.} The hypercube tree-sliced metric (i.e. partition-based tree metric) is our suggestion to build practical tree metrics for \TSW~when used on low-dimensional data spaces. We emphasize that we do not try to mimic the Euclidean \OT~(i.e. $W_2$) or the sliced-Wasserstein (\SW), but rather propose a variant of \OT~distance. As stated in the main text, \SW~is a special case of \TSW. From an empirical point of view, we have carried out the following experiment to investigate an empirical relation between \TSW~and $W_2$ distance: 

For a query point $q$, let $p$ be its nearest neighbor w.r.t. TSW. Figure~\ref{fg:RankOrderTSW} illustrates that $p$ is very likely among the top $5$ on MPEG7 dataset, and top $10$ on Orbit dataset, near neighbors on the $W_2$ space. Results are averaged over $1000$ runs of random split $90\%/10\%$ for training and test. When the number of tree-slices in TSW increases, the $W_2$ near-neighbor rank of $p$ is improved. These empirical results suggest that TSW may agree with some aspects of $W_2$.

\subsection{Proof of: Negative Definiteness for $\ell_1$ Distance}

For two real numbers $a, b$, the function $(a, b) \mapsto (a-b)^2$ is obviously negative definite. Following \cite[Corollary 2.10, p.78]{berg1984harmonic}, the function $(a, b) \mapsto \left|a-b\right|$ is negative definite. Therefore, $\ell_1$ distance is a sum of negative definite functions. Thus, $\ell_1$ distance is negative definite.

\subsection{Indefinite Divisibility for Tree-Sliced-Wasserstein Kernel}

Inspired by Le and Yamada \cite{le2018persistence}, we derive the following proof of indefinite divisibility for the \TSW~kernel. For probability $\mu, \nu$ on tree $\Tt$, and $i \in \NN^*$, let $k_{\TSW_{i}}(\mu, \nu) = \exp(-\frac{t}{i} \TSW(\mu, \nu))$. We have $k_{\TSW}(\cdot, \cdot) = \left(k_{\TSW_i}(\cdot, \cdot)\right)^i$, and $k_{\TSW_i}(\cdot, \cdot)$ is positive definite. Following \cite[\S3, Definition 2.6, p.76]{berg1984harmonic}, $k_{\TSW}$ is indefinitely divisible. Therefore, one does not need to recompute the Gram matrix of \TSW~kernel for each choice of $t$, since it indeed suffices to compute \TSW~distances between empirical measures in a training set once.

\section{More Experimental Results}\label{sec:MoreExpResults}

We provide many further experimental results for our proposed tree-Wasserstein kernel on topological data analysis ($\TDA$) and word embedding-based document classification.

\subsection{Topological Data Analysis}

\paragraph{Orbit recognition.} Figure \ref{fg:OrbitKTW} shows SVM results and time consumption of kernel matrices of $k_{\TW}$ with different $(n_s, H_{\Tt})$ on Orbit dataset.

\paragraph{Object shape classification.} Figure \ref{fg:MPEG7KTW} shows SVM results and time consumption of kernel matrices of $k_{\TW}$ with different $(n_s, H_{\Tt})$ on MPEG7 dataset.

\paragraph{Change point detection for material data analysis.} Figure \ref{fg:GranularKTW} and Figure \ref{fg:SiO2KTW} show time consumption of kernel matrices of $k_{\TW}$ with different $(n_s, H_{\Tt})$ on granular packing system and SiO$_2$ datasets respectively.

\subsection{Word Embedding-based Document Classification}


Figure~\ref{fg:RECIPE_ALL_KTW}, Figure~\ref{fg:CLASSIC_ALL_KTW}, and Figure~\ref{fg:AMAZON_ALL_KTW} show SVM results and time consumption of kernel matrices of $k_{\TW}$ with different $(n_s, H_{\Tt}, \kappa)$, and $k_{\SW}$ with different $n_s$ on \texttt{RECIPE}, \texttt{CLASSIC}, and \texttt{AMAZON} datasets respectively.

\begin{figure}
  \begin{center}
    \includegraphics[width=0.5\textwidth]{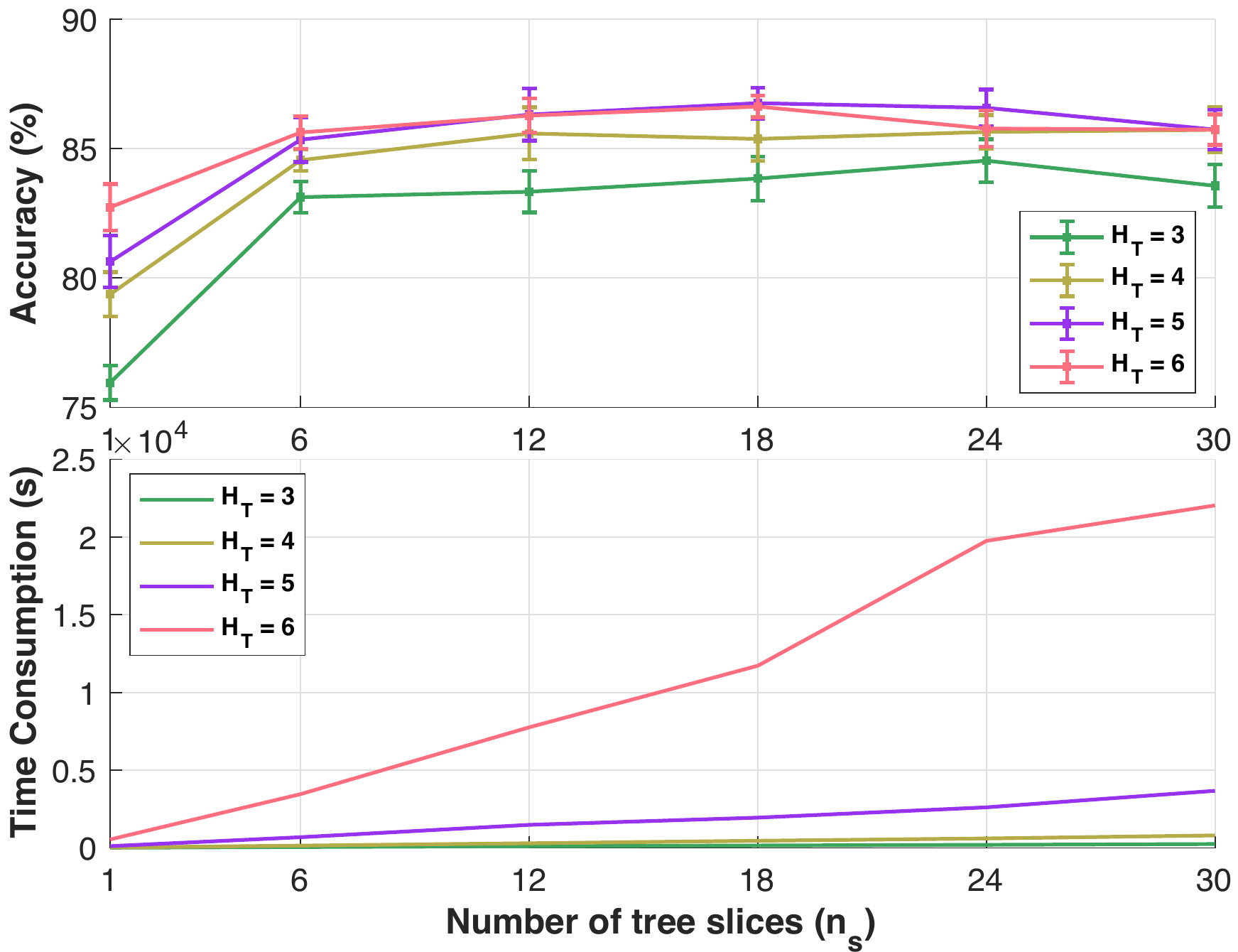}
  \end{center}
  \caption{Results of SVM and time consumption of kernel matrices of $k_{\TW}$ with different $(n_s, H_{\Tt})$ on Orbit dataset.}
  \label{fg:OrbitKTW}
\end{figure}

\begin{figure}
  \begin{center}
    \includegraphics[width=0.5\textwidth]{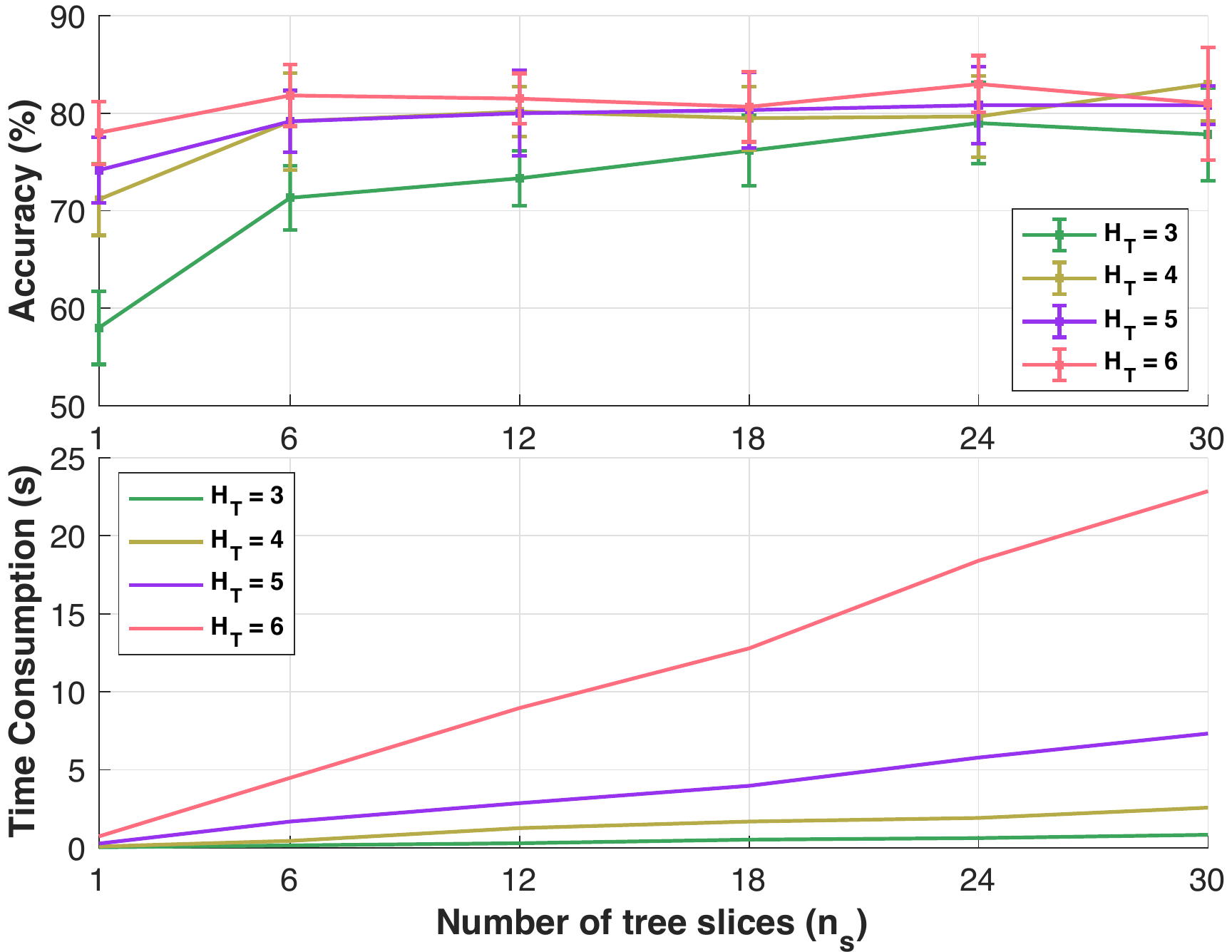}
  \end{center}
  \caption{Results of SVM and time consumption of kernel matrices of $k_{\TW}$ with different $(n_s, H_{\Tt})$ on MPEG7 dataset.}
  \label{fg:MPEG7KTW}
\end{figure}

\begin{figure}
  \begin{center}
    \includegraphics[width=0.5\textwidth]{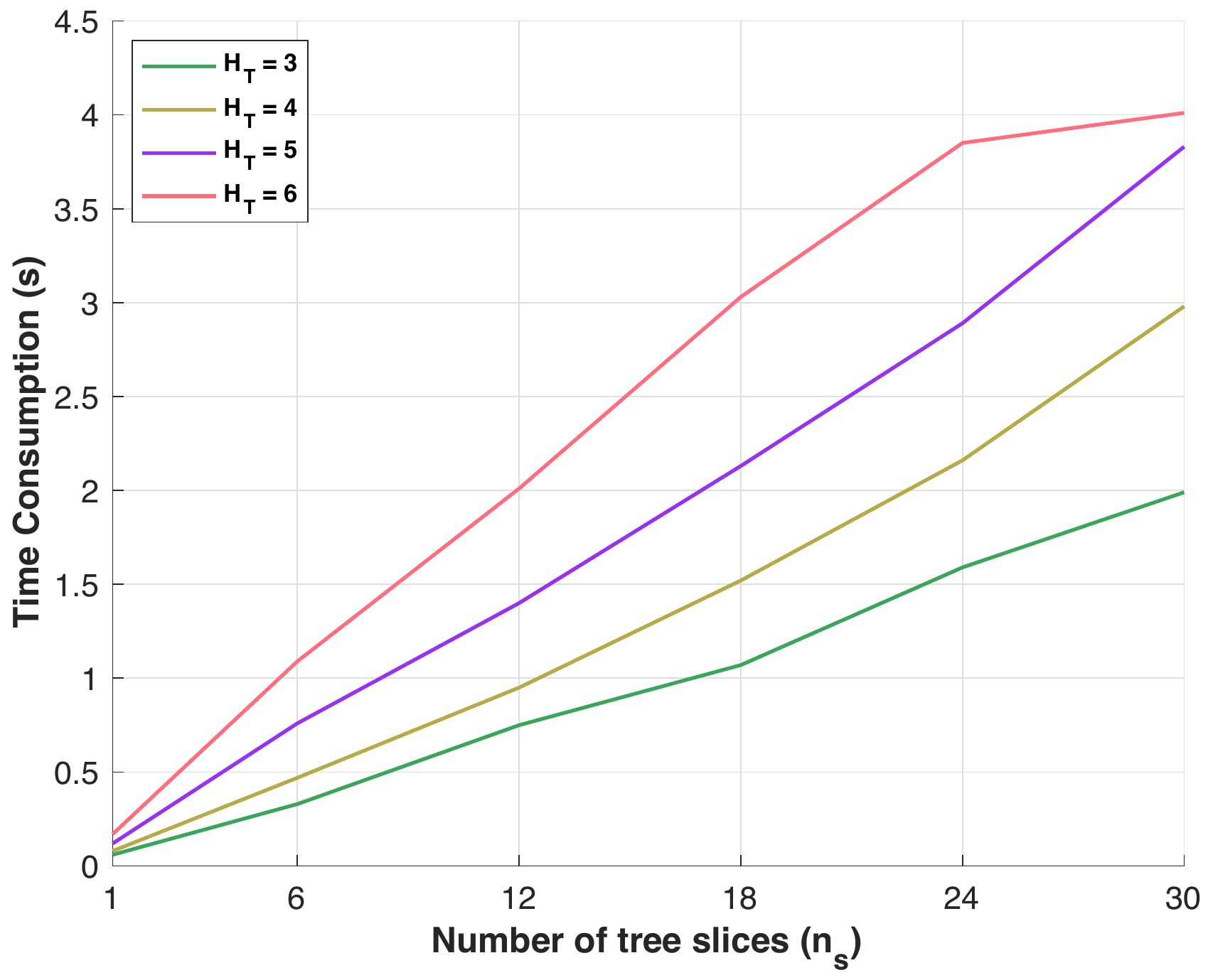}
  \end{center}
  \caption{Time consumption of kernel matrices of $k_{\TW}$ with different $(n_s, H_{\Tt})$ on granular packing system dataset.}
  \label{fg:GranularKTW}
\end{figure}

\begin{figure}
  \begin{center}
    \includegraphics[width=0.5\textwidth]{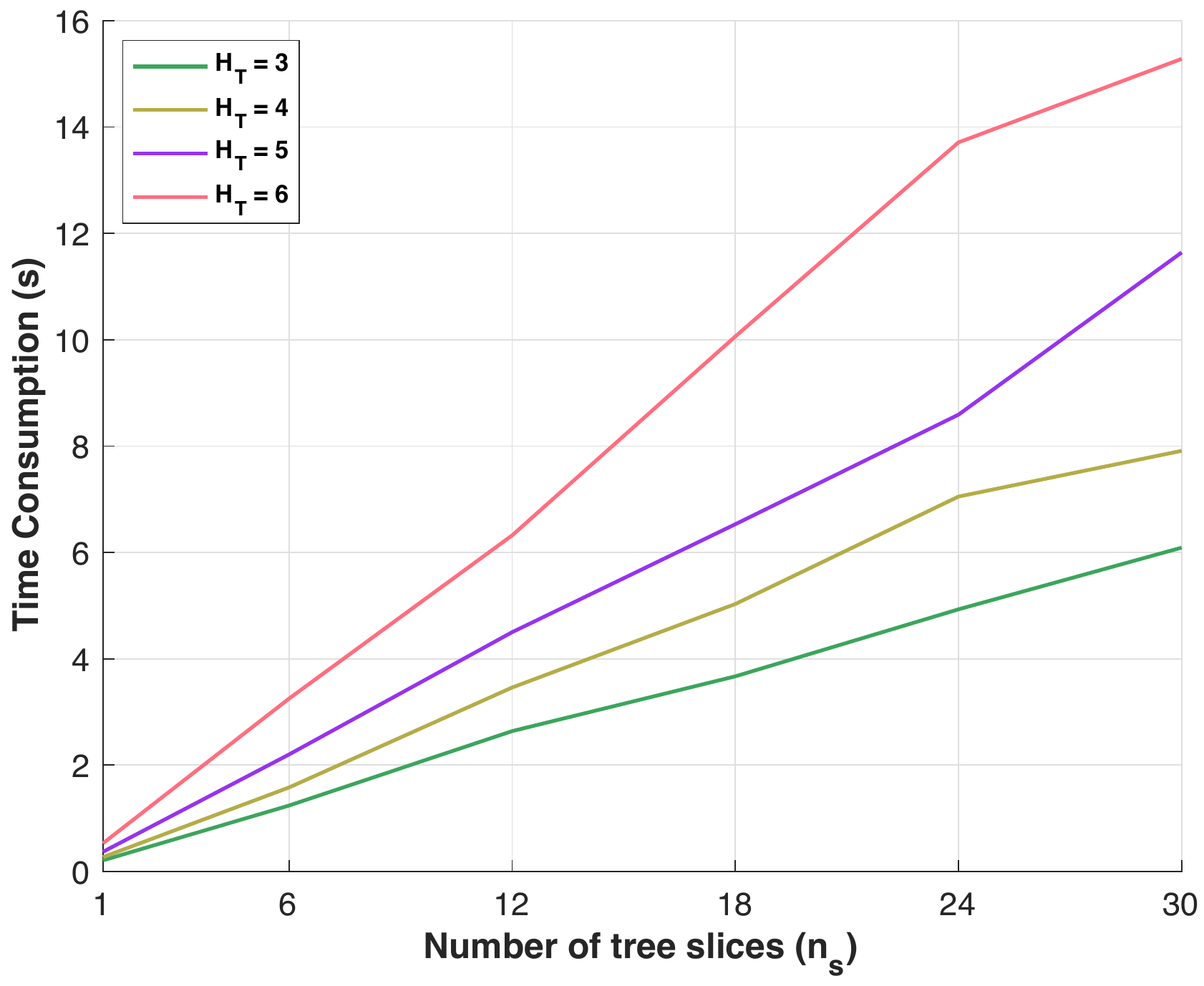}
  \end{center}
  \caption{Time consumption of kernel matrices of $k_{\TW}$ with different $(n_s, H_{\Tt})$ on SiO$_2$ dataset.}
  \label{fg:SiO2KTW}
\end{figure}

\begin{figure}
  \begin{center}
    \includegraphics[width=0.8\textwidth]{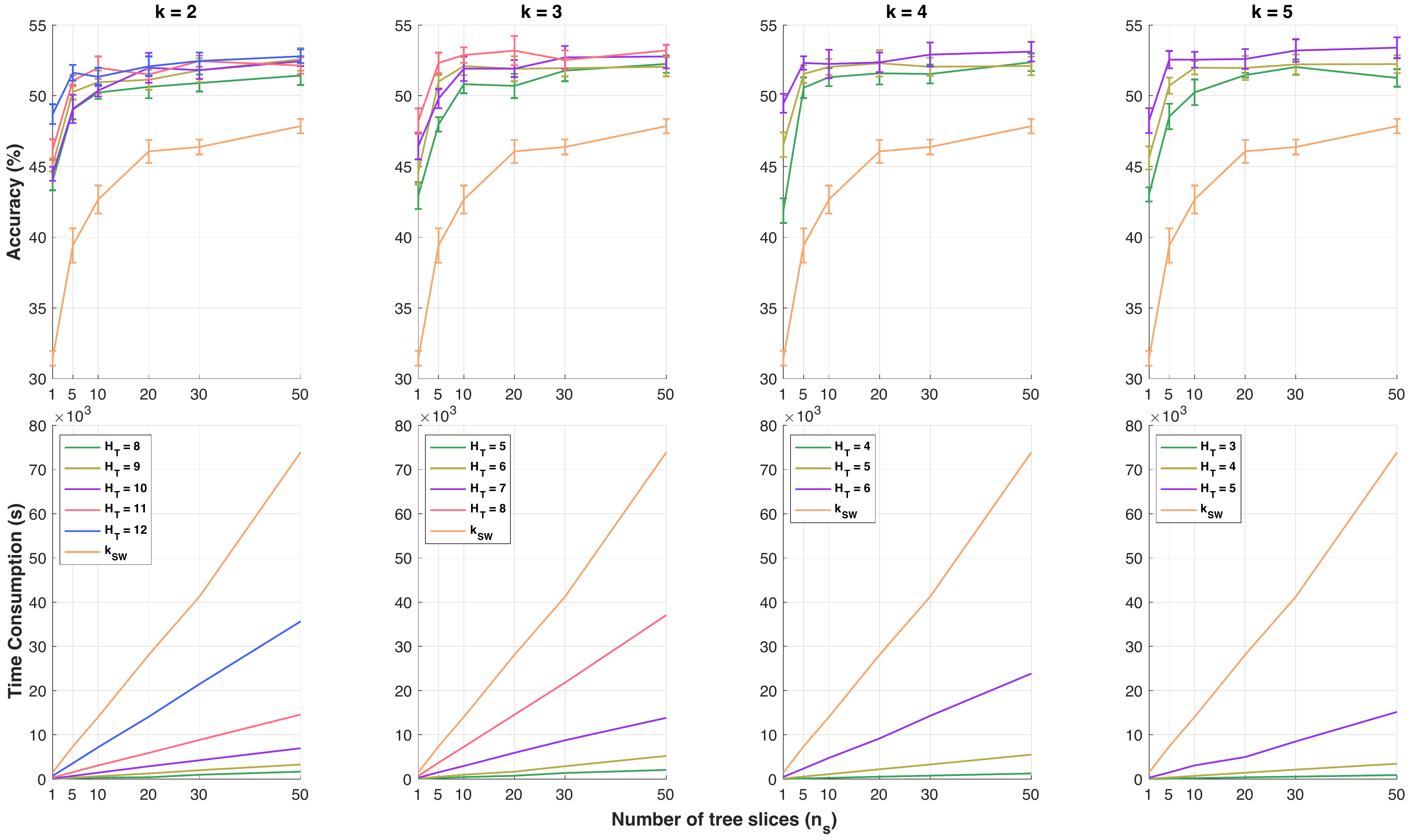}
  \end{center}
  \caption{Results of SVM and time consumption of kernel matrices of $k_{\TW}$ with different $(n_s, H_{\Tt}, \kappa)$, and $k_{\SW}$ with different $n_s$ on \texttt{RECIPE} dataset.}
  \label{fg:RECIPE_ALL_KTW}
\end{figure}

\begin{figure}
  \begin{center}
    \includegraphics[width=0.8\textwidth]{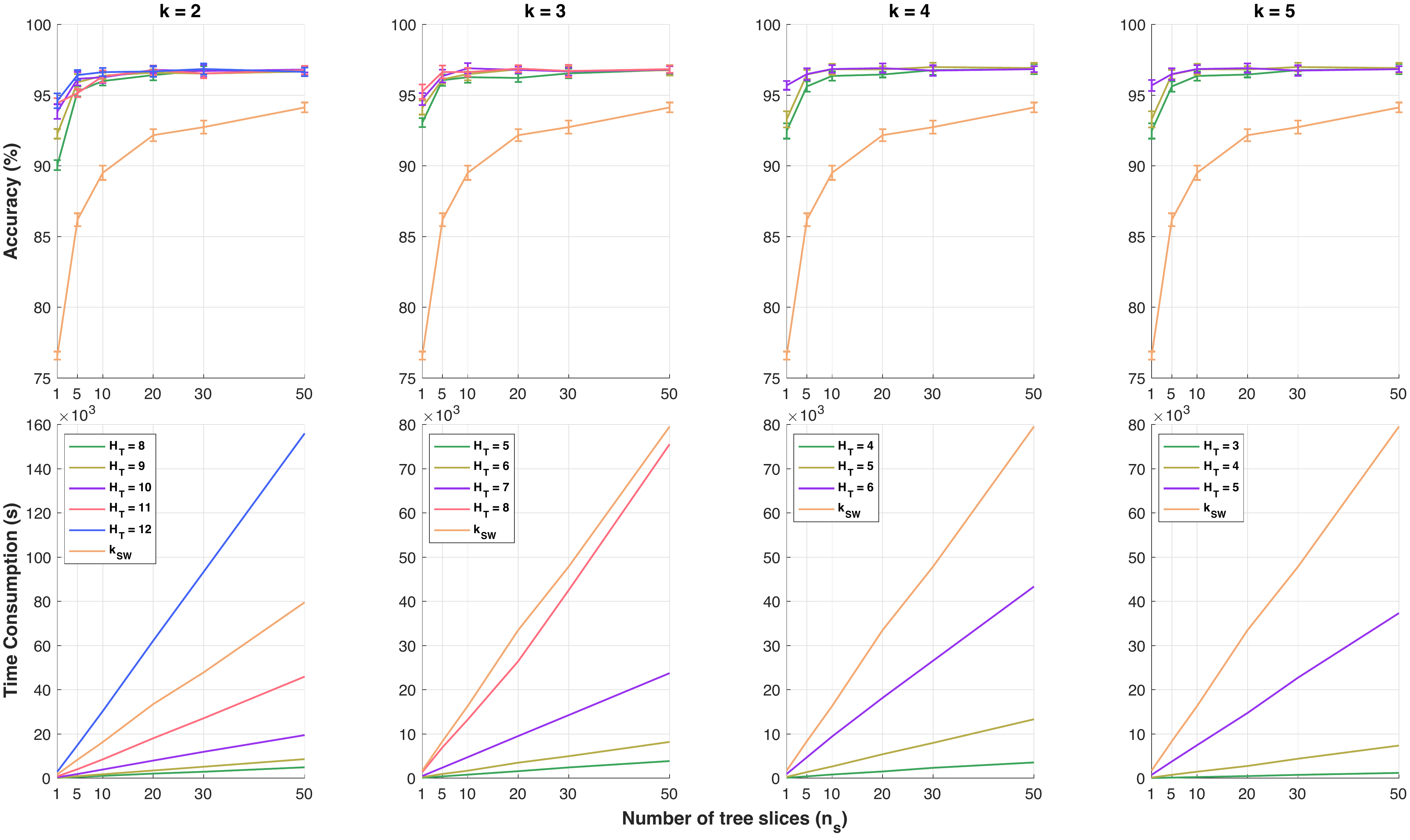}
  \end{center}
  \caption{Results of SVM and time consumption of kernel matrices of $k_{\TW}$ with different $(n_s, H_{\Tt}, \kappa)$, and $k_{\SW}$ with different $n_s$ on \texttt{CLASSIC} dataset.}
  \label{fg:CLASSIC_ALL_KTW}
\end{figure}

\begin{figure}
  \begin{center}
    \includegraphics[width=0.8\textwidth]{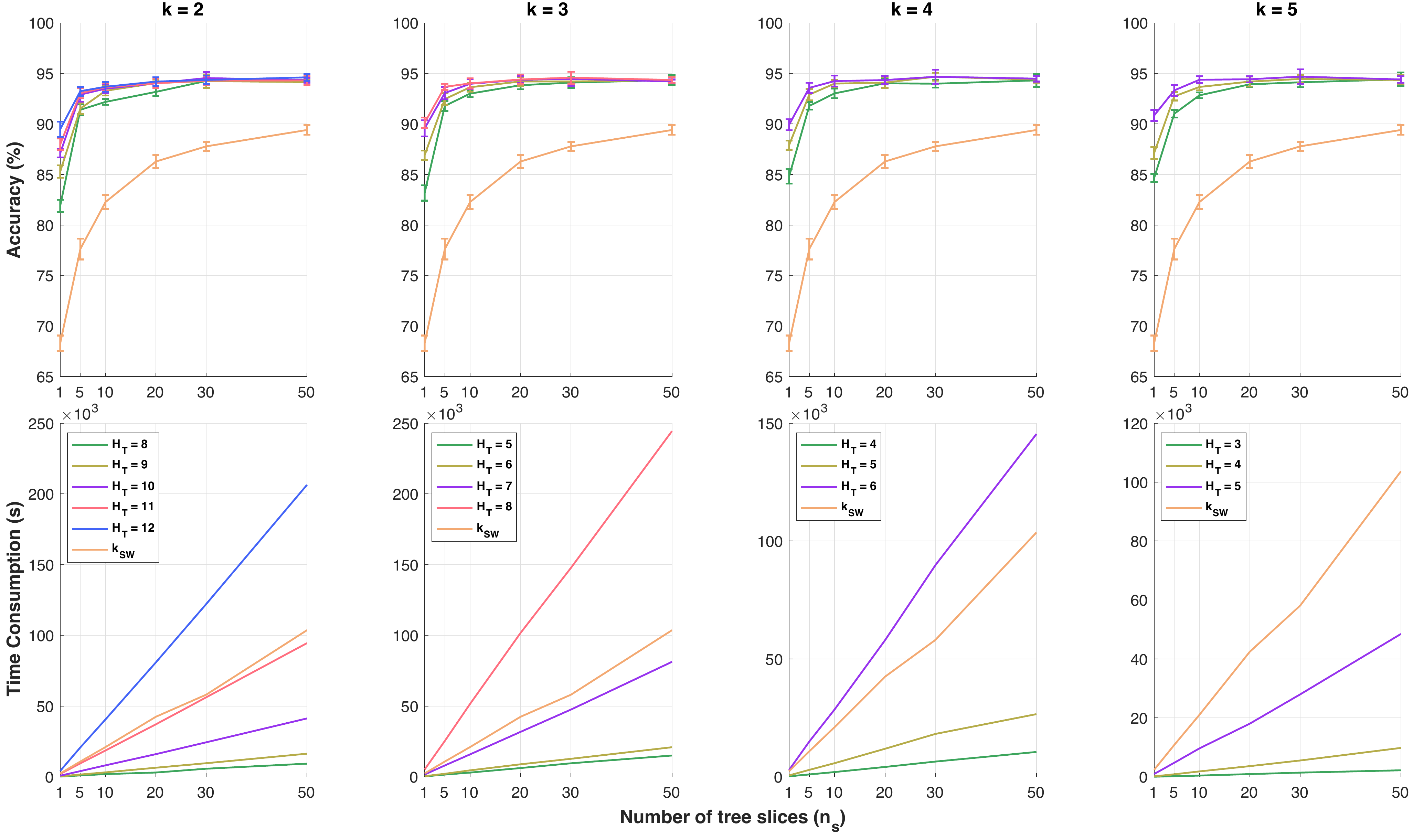}
  \end{center}
  \caption{Results of SVM and time consumption of kernel matrices of $k_{\TW}$ with different $(n_s, H_{\Tt}, \kappa)$, and $k_{\SW}$ with different $n_s$ on \texttt{AMAZON} dataset.}
  \label{fg:AMAZON_ALL_KTW}
\end{figure}

\section{Brief Reviews of Kernels, the Farthest-Point Clustering, and the Synthesized Orbit Dataset}
In this section, we give brief reviews for kernels, and the farthest-point clustering \cite{gonzalez1985clustering}.Then, we provide details for the synthesized orbit dataset for orbit recognition).

\subsection{A Brief Review of Kernels}

We review some important definitions and theorems about kernels used in our work.

\paragraph{Positive definite kernels \cite[p.66--67]{berg1984harmonic}.} A kernel function $k: \Xx \times \Xx \rightarrow \RR$ is  positive definite if $\forall n \in \NN^{*}, \forall x_1, x_2, ..., x_n \in \Xx$, $\sum_{i, j} c_i c_j k(x_i, x_j) \ge 0$, $\forall c_i \in \RR$.

\paragraph{Negative definite kernels \cite[p.66--67]{berg1984harmonic}.} A kernel function $k: \Xx \times \Xx \rightarrow \RR$ is  negative definite if $\forall n \ge 2, \forall x_1, x_2, ..., x_n \in \Xx$, $\sum_{i, j} c_i c_j k(x_i, x_j) \le 0$, $\forall c_i \in \RR$ such that $\sum_i c_i = 0$.

\paragraph{Berg-Christensen-Ressel Theorem \cite[Theorem 3.2.2, p.74]{berg1984harmonic}.}
If $\kappa$ is a \textit{negative definite} kernel, then kernel $k_{t}(x, z) := \exp{\left(- t \kappa(x, z)\right)}$ is positive definite for all $t > 0$.

\subsection{A Brief Review of the Farthest-Point Clustering for Data Space Partition}

The data space partition can be modeled as a $\kappa$-center problem. Given $n$ data points $x_1, x_2, ..., x_n$, and a predefined number of clusters $\kappa$, the goal of $\kappa$-center problem is to find a partition of $n$ points into $\kappa$ clusters $\texttt{S}_1, \texttt{S}_2, ..., \texttt{S}_{\kappa}$ as well as their corresponding centers $\texttt{c}_1, \texttt{c}_2, ..., \texttt{c}_{\kappa}$ to minimize the maximum radius of clusters.

The farthest-point clustering \cite{gonzalez1985clustering} is a simple greedy algorithm, summarized in Algorithm~\ref{alg:FarthestPointClustering}. Gonzalez \cite{gonzalez1985clustering} also proved that the farthest-point clustering computes a partition with maximum radius at most twice the optimum for $\kappa$-center clustering. The complexity for a direct implementation for the farthest-point clustering as in Algorithm~\ref{alg:FarthestPointClustering} is $O(n\kappa)$. This complexity can be reduced into $O(n\log \kappa)$ by using the algorithm in \cite{feder1988optimal}.

\begin{algorithm} 
\caption{\texttt{Farthest\_Point\_Clustering}($X, \kappa$)} 
\label{alg:FarthestPointClustering} 
\begin{algorithmic}[1] 
    \REQUIRE $X = \left(x_1, x_2, \dotsc, x_n \right)$: a set of $n$ data points, and $\kappa$: the predefined number of clusters.
    \ENSURE Clustering centers $\texttt{c}_1, \texttt{c}_2, ..., \texttt{c}_{\kappa}$ and cluster index for each point $x_i$.
    
     \STATE $\texttt{c}_1 \leftarrow$ a random point $x \in X$.
     \STATE Set of cluster $C \leftarrow \texttt{c}_1$.
     \STATE $i \leftarrow 1$.
     \WHILE{$i < \kappa$ and $n-i > 0$}
     	\STATE $i \leftarrow i + 1$.
     	\STATE $\texttt{c}_i \leftarrow \max_{x_j \in X} {\min_{\texttt{c} \in C} \norm{x_j - \texttt{c}}}$. (a farthest point $x_j \in X$ to $C$).
	\STATE $C \leftarrow C \cup \texttt{c}_i$. (Add the new center into $C$).
     \ENDWHILE   	
     \STATE Each data point $x_j \in X$ is assigned to its nearest center $\texttt{c}_i \in C$.
     
\end{algorithmic}
\end{algorithm}

\subsection{Details of the Synthesized Orbit Dataset}

Adams et al. \cite[\S 6.4.1]{adams2017persistence} proposed a synthesized dataset for link twist map, a discrete dynamical system to model flows in DNA microarrays \cite{hertzsch2007dna}. 

Given an initial position $(a_0, b_0) \in \left[0, 1\right]^2$, and $t>0$, an orbit is modeled as 
\begin{eqnarray}
a_{i+1} =& a_i + t b_i (1 - b_i) \mod 1, \\
b_{i+1} =& b_i + t a_{i+1} (1 - a_{i+1}) \mod 1.
\end{eqnarray}
There are $5$ classes, corresponding to $5$ different parameters $t = 2.5, 3.5, 4, 4.1, 4.3$. For each class, we generated $1000$ orbits with random initial positions where each orbit contains $1000$ points.

\begin{figure}
  \begin{center}
    \includegraphics[width=0.5\textwidth]{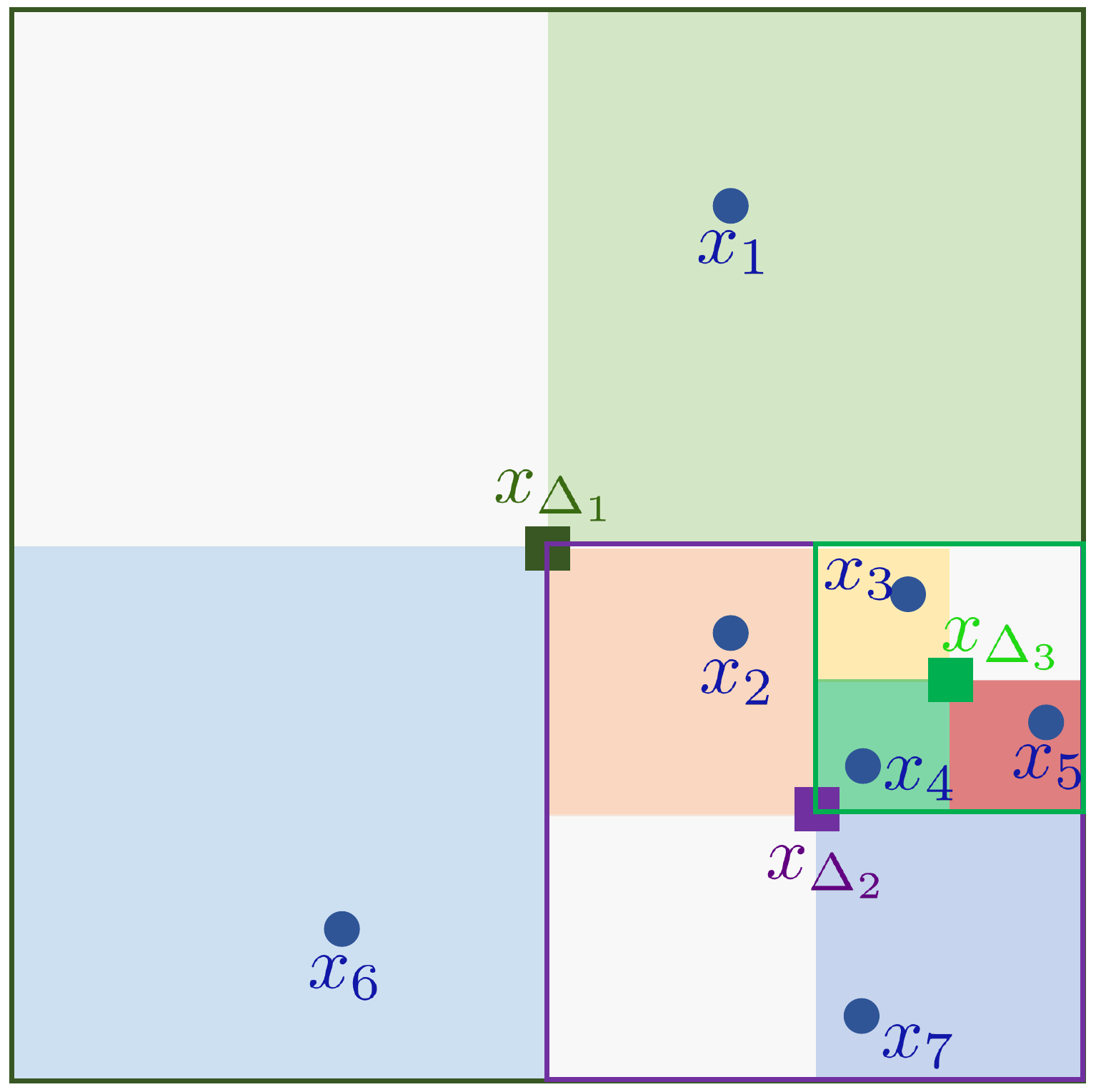}
  \end{center}
  \caption{An example about the partition-based tree metric for a set of points in a $2$-dimensional space.}
  \label{fg:PartitionTreeMetric_Point}
\end{figure}

\begin{figure}
  \begin{center}
    \includegraphics[width=0.7\textwidth]{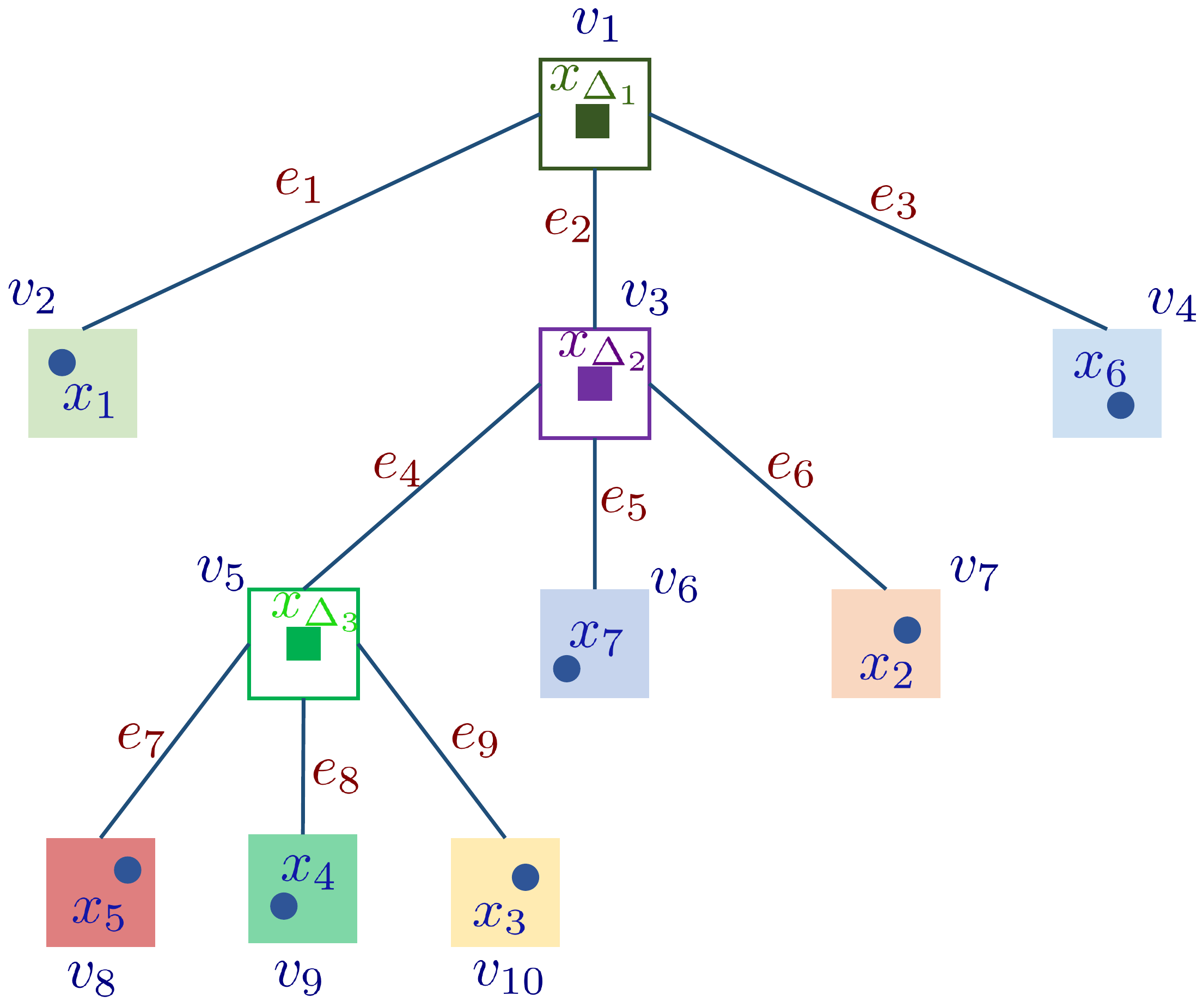}
  \end{center}
  \caption{The corresponding tree structure for the example in Figure~\ref{fg:PartitionTreeMetric_Point}.}
  \label{fg:PartitionTreeMetric_TM}
\end{figure}

\section{More Examples on the Partition-based Tree Metric, Quantization and Cluster Sensitivity Problems, and Persistence Diagrams}

In this section, we give some examples for the partition-based tree metric, quantization and cluster sensitivity problems and persistence diagrams.

\subsection{An Example on the Partition-based Tree Metric}

Given a set $X$ of $7$ data points $x_i \mid_{1 \le i \le 7}$ in a $2$-dimensional space as illustrated in Figure~\ref{fg:PartitionTreeMetric_Point}, one can choose a square region $\texttt{s}_0$ as the largest square in Figure~\ref{fg:PartitionTreeMetric_Point} containing all data points in $X$, and denote $\ell$ as the side of the largest square.

At height level $0$ in tree $\Tt$, applying the $\texttt{Partition\_HC}$ algorithm for $\texttt{s}_0$, one has $x_{\Delta_1}$ (center of $\texttt{s}_{0}$) as a node (i.e., the root) represented for $\texttt{s}_0$ in the constructed tree structure $\Tt$, and $4$ child square regions\footnote{we use a clock order to enumerate for those child square regions: top right --> bottom right --> bottom left --> top left.} with side $\ell/2$, denoted $\texttt{s}_{1a}$ (containing $x_1$), $\texttt{s}_{1b}$ (containing $x_2, x_3, x_4, x_5, x_7$), $\texttt{s}_{1c}$ (containing $x_6$), and $\texttt{s}_{1d}$ (containing no data points). Therefore, one can discard $\texttt{s}_{1d}$, use either data points ($x_1$, or $x_6$) or their centers represented for $\texttt{s}_{1a}$ and $\texttt{s}_{1c}$ respectively, and then apply the recursive procedure to partition for $\texttt{s}_{1b}$ (at height level $1$).

At height level $1$ in tree $\Tt$, applying the $\texttt{Partition\_HC}$ algorithm for $\texttt{s}_{1b}$, one has $x_{\Delta_2}$ (center of $\texttt{s}_{1b}$) as a node represented for $\texttt{s}_{1b}$ in the constructed tree structure $\Tt$, and $4$ child square regions with side $\ell/4$, denoted $\texttt{s}_{2a}$ (containing $x_3, x_4, x_5$), $\texttt{s}_{2b}$ (containing $x_7$), $\texttt{s}_{2c}$ (containing no data points), and $\texttt{s}_{2d}$ (containing $x_2$). Therefore, one can discard $\texttt{s}_{2c}$, use either data points ($x_7$, or $x_2$) or their centers represented for $\texttt{s}_{2b}$ and $\texttt{s}_{2d}$ respectively, and then apply the recursive procedure to partition for $\texttt{s}_{2a}$ (at height level 2).

At height level $2$ in tree $\Tt$, applying the $\texttt{Partition\_HC}$ algorithm for $\texttt{s}_{2a}$, one has $x_{\Delta_3}$ (center of $\texttt{s}_{2a}$) as a node represented for $\texttt{s}_{2a}$ in the constructed tree structure $\Tt$, and $4$ child square regions with side $\ell/8$, denoted $\texttt{s}_{3a}$ (containing no data points), $\texttt{s}_{3b}$ (containing $x_5$), $\texttt{s}_{3c}$ (containing $x_4$), and $\texttt{s}_{3d}$ (containing $x_3$). Therefore, one can discard $\texttt{s}_{3a}$, and use either data points ($x_5$, or $x_4$, or $x_3$) or their centers represented for $\texttt{s}_{3b}$, $\texttt{s}_{3c}$ and $\texttt{s}_{3d}$ respectively.

Hence, at the end, one obtains a tree structure $\Tt$ as illustrated in Figure~\ref{fg:PartitionTreeMetric_TM}, containing $10$ nodes $v_i \mid_{1 \le i \le 10}$, and $9$ edges $e_j \mid_{1 \le j \le 9}$. Node $x_1$ is the root of $\Tt$. The highest level in tree $\Tt$ is $3$. For lengths of edges in $\Tt$, one can apply any metrics in the $2$-dimensional space.

\subsection{Some Examples and Discussion about the Quantization and Cluster Sensitivity Problems}

The quantization or cluster sensitivity problem for partition or clustering is that some close data points are partitioned or clustered to adjacent, but different hypercubes or clusters respectively.

For example, in Figure~\ref{fg:Clustering2S}, we illustrate different results of clustering, obtained with different initializations for the farthest-point clustering for a given $10000$ random data points into $20$ clusters. For data points near a border of adjacent, but different clusters, although they are very close to each other, they are still in different clusters, or known as a cluster sensitivity problem. Whether those data points are clustered into the same or different cluster(s), it depends on an initialization of the farthest-point clustering. Therefore, by combining many different clustering results, obtained with various initializations for the farthest-point clustering algorithm, one can reduce an affect of the cluster sensitivity problem. Similarly for a quantization problem in the partition procedure (e.g. those data points near a border of adjacent, but different square regions of the same side in Figure~\ref{fg:PartitionTreeMetric_Point}).

\begin{figure}
  \begin{center}
    \includegraphics[width=0.82\textwidth]{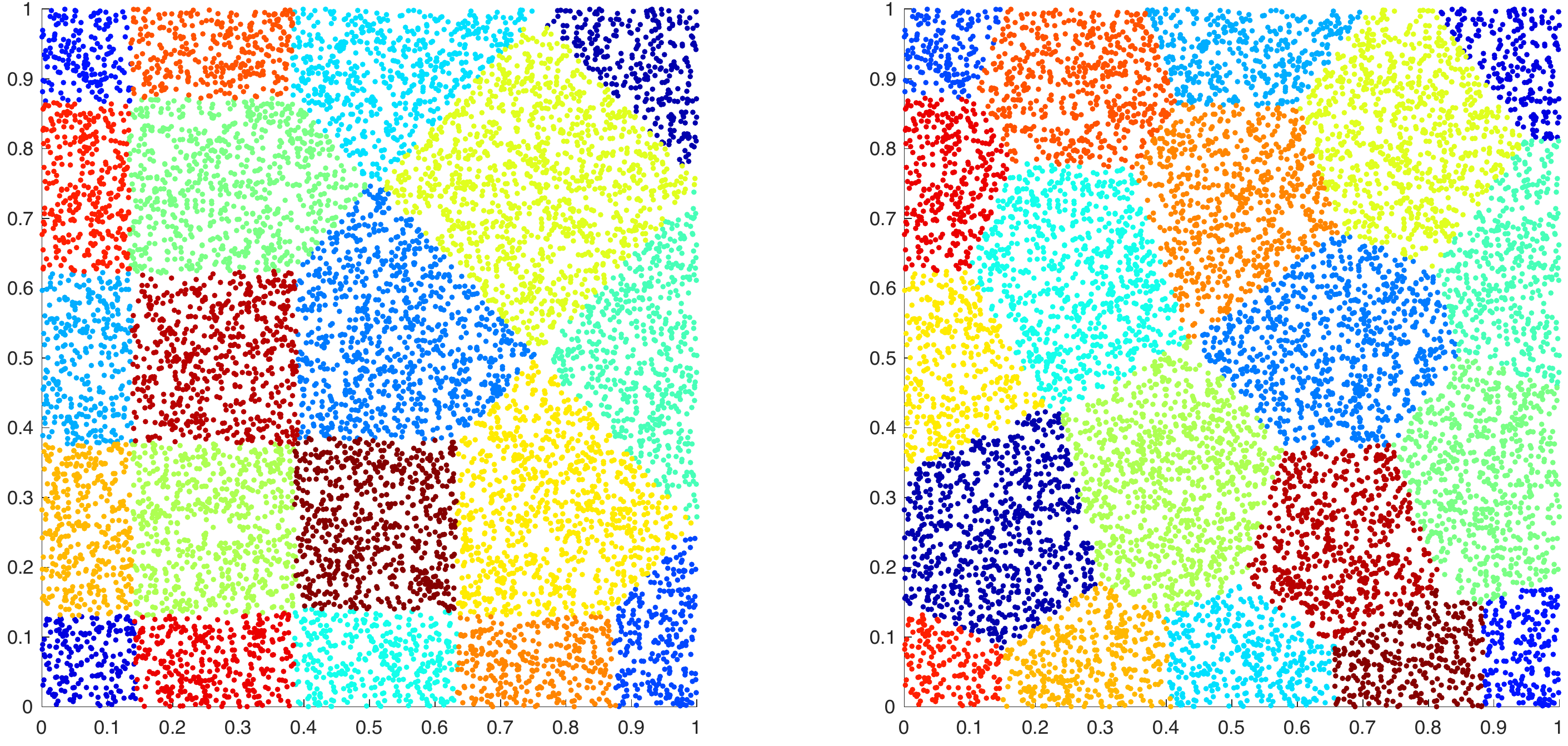}
  \end{center}
  \caption{An illustration of the farthest-point clustering for $10000$ data points into $20$ clusters with different initializations.}
  \label{fg:Clustering2S}
\end{figure}

\subsection{An Example of Persistence Diagrams}

In Figure \ref{fg:PDexample}, we give an example of a persistence diagram on a real-value function $f: \Xx \mapsto \RR$. Persistence homology considers a family of sublevel sets $f^{-1}((-\infty, t])$. When $t$ in $f^{-1}((-\infty, t])$ goes from $-\infty$ to $+\infty$, we collect all topological events, e.g., births and deaths of connected components (i.e., $0$-dimensional topological features). As in Figure \ref{fg:PDexample}, connected components appears (i.e., birth) at $t=t_1, t_2$, and disappear (i.e., death) at $t=+\infty, t_3$ respectively. Therefore, persistence diagram of $f$ is $\Dg f = \left\{(t_1, +\infty), (t_2, t_3) \right\}$.

\begin{figure}
  \begin{center}
    \includegraphics[width=0.55\textwidth]{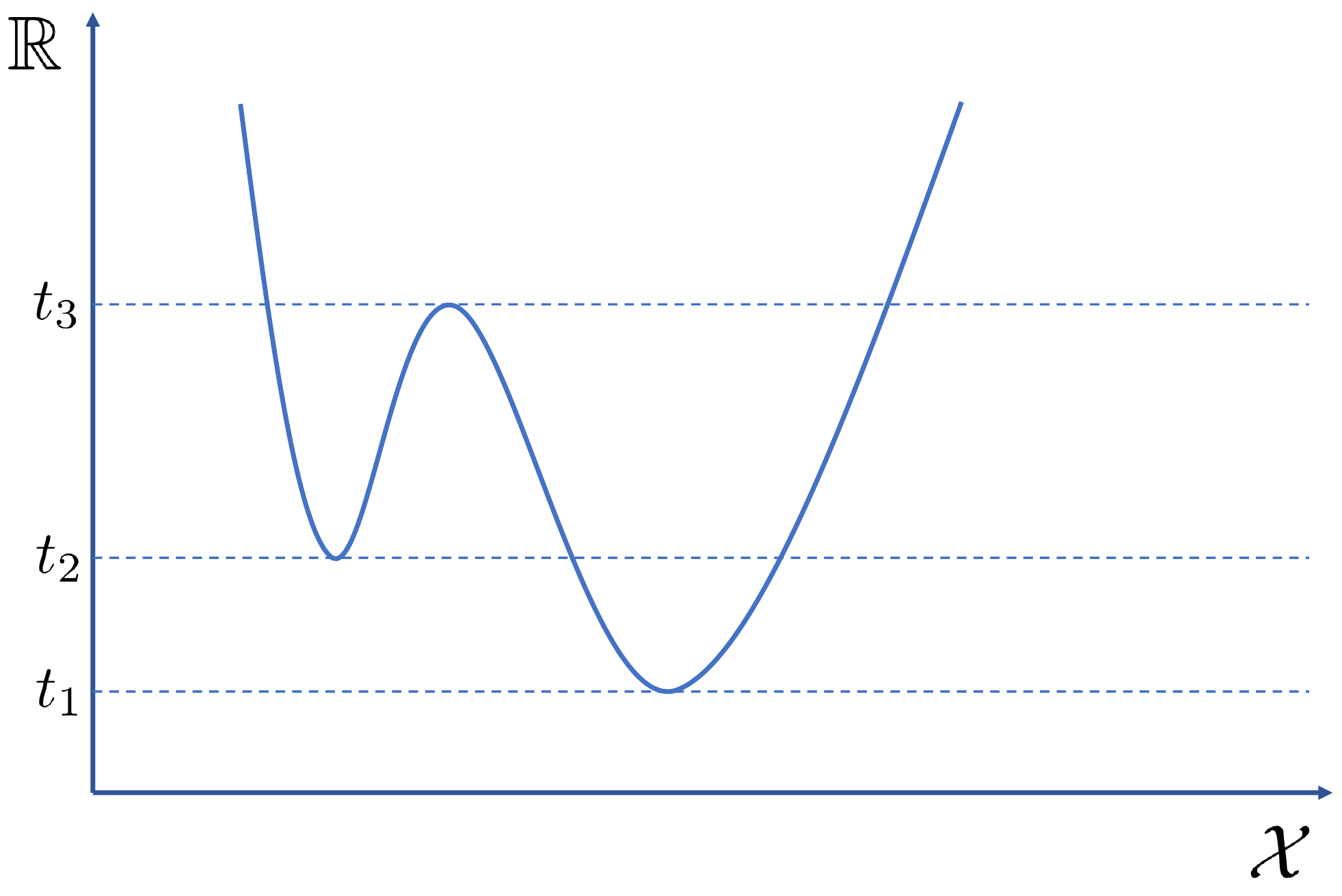}
  \end{center}
  \caption{An example of a persistence diagram on a real-value function $f: \Xx \mapsto \RR$. With sublevel sets $f^{-1}((-\infty, t])$ filtration, persistence diagram of $f$ is $\Dg f = \left\{(t_1, +\infty), (t_2, t_3) \right\}$.}
  \label{fg:PDexample}
\end{figure}

\bibliography{MLREF}



\end{document}